\DeclareMathOperator{\Proj}{\mathrm{Proj}}
\newcommand{\R}{\mathbb{R}}
\newtheorem{assumption}{Assumption}
\newtheorem{lemma}{Lemma}
\newtheorem{definition}{Definition}
\newtheorem{remark}{Remark}
\newtheorem{theorem}{Theorem}
\title{\LARGE \bf
New Algorithms for Discrete-Time Parameter Estimation
}
\author[1]{Yingnan Cui}
\author[2]{Joseph E. Gaudio}
\author[1]{Anuradha M. Annaswamy}
\affil[1]{Massachusetts Institute of Technology}
\affil[2]{Aurora Flight Sciences}
\date{}
\begin{document}
\allowdisplaybreaks

\maketitle
\thispagestyle{empty}
\pagestyle{empty}

\begin{abstract}
    We propose two algorithms for discrete-time parameter estimation, one for time-varying parameters under persistent excitation (PE) condition, another for constant parameters under no PE condition. For the first algorithm, we show that in the presence of time-varying unknown parameters, the parameter estimation error converges uniformly to a compact set under conditions of persistent excitation, with the size of the compact set proportional to the time-variation of unknown parameters. Leveraging a projection operator, the second algorithm is shown to result in boundedness guarantees when the plant has constant unknown parameters. Simulations show better convergence results compared to recursive least squares (RLS) and comparable results to RLS with forgetting factor.
\end{abstract}

\section{Introduction}
\label{sec:intro}
A central task in adaptive control is the compensation for online parametric uncertainties in the processes of adaptive filtering, prediction and control\cite{Narendra2005, Lavretsky2013, Goodwin_1984, Ioannou1996, Landau11}. Over the past several decades, several discrete-time online estimation schemes have been designed, including conditions for parametric convergence\cite{Ljung_1987, Anderson_1982, ortega20}. Most of the methods in these works assume that the unknown parameters are constant. This paper focuses on the case where parameters are varying with time.

In many of the adaptive filtering, prediction, and control problems, the underlying structure consists of a linear regression relation between the prediction error and parameter error\cite{Ioannou1996, Aranovskiy_2019, Aranovskiy_2016, narendra11, Gaudio_2021a, gaudio20thesis}. A typical gradient-based algorithm, for example, minimizes a quadratic criterion in terms of the prediction error\cite{Goodwin_1984,Landau11}. In adaptive systems, a necessary and sufficient condition that is needed to ensure uniform asymptotic convergence of the estimates to their true values is noted as persistent excitation \cite{Boyd_1983, Boyd_1986}. The gradient descent method has a constant learning rate which often leads to extremely slow convergence of the estimates. The well-known recursive least squares (RLS) algorithm has a covariance matrix that is time-varying and is adjusted through the use of the outer product of the underlying regressor\cite{Goodwin_1984,Landau11}. One of the main inadequacies of the RLS algorithm is its inability to ensure estimation in the presence of time-varying parameters, which is due to the fact that the learning rates themselves can converge rapidly to zero and as a result, unable to track the changes in the parameters.
The literature on online parameter estimation algorithms in discrete-time is vast (see for example, \cite{Ljung_1987,Anderson_1982,Boyd_1986,ortega20,Boyd_1983, Aranovskiy_2019, Aranovskiy_2016, narendra11, bruce20, bruce20a, pal08, islam20, yu13, fort81}). In many of them, the learning rates are assumed to be constant. Of these, works such as \cite{bruce20, bruce20a, islam20, yu13, ortega20} have focused on a RLS-type algorithm and address variations thereof. In \cite{bruce20, bruce20a, pal08}, combined variable-rate and variable-direction forgetting is adopted to address some of the inadequacies of the RLS algorithm. In \cite{islam20}, a data-dependent forgetting rate is used in the update of the covariance matrix. A nonlinear weighting coefficient is devised in the update of the covariance matrix in \cite{yu13}. Another interesting approach has been suggested in \cite{ortega20}, which consists of a dynamic regressor extension and mixing (DREM) idea, and it tries to lift the limitation of the input signal by a linear, BIBO-stable operator. In most of these works, the unknown parameters are assumed to be constant. In addition, the focus in most of these papers is more in showing boundedness of the parameter estimates, rather than the speed of convergence of these estimates to their true values.

In this paper, we present two algorithms: the first assumes that persistent excitation conditions are met and the second assumes that they are not, both of which represent the two main contributions of this paper. Time-varying learning rate in the form of a time-varying gain matrix is included in both algorithms, but no covariance resetting or forgetting factor is introduced. It is shown that under persistent excitation, when the unknown parameter is time-varying, the first algorithm leads to convergence of the parameter error to a compact set that is proportional to the rate of variation of the unknown parameters. The convergence is uniform and asymptotic under persistent excitation. For the second algorithm, no assumptions of persistent excitation are made, and it is shown that when the unknown parameters are constants, the parameter estimation error is bounded. In contrast to RLS with forgetting based algorithms such as those in \cite{fort81, pal08, bruce20, bruce20a}, we provide a stability proof in this paper for this latter case when persistent excitation is absent, through the construction of a Lyapunov function. Simulation results show that our algorithms are comparable to RLS with forgetting, and result in much faster convergence compared to RLS algorithms.

Section \ref{sec:pre} presents the definitions of discrete-time persistent excitation and some mathematical preliminaries. The underlying problems are presented in Section \ref{sec:algorithm}. Section \ref{sec:algo} presents two classes of algorithms. Section \ref{sec:stab-conv-analys} gives stability and convergence analysis for the first algorithm. Section \ref{sec:proj} introduces projection in the update of the time-varying gain matrix and establishes stability for the second algorithm. Simulation results are shown in Section \ref{sec:sim}. Section \ref{sec:conclusion} provides a summary and concluding remarks.

\section{Preliminaries}
\label{sec:pre}
We denote $\mathbb{N}^+$ as the set of positive integers, and $\|\cdot\|$ as the Euclidean norm.
\begin{definition}[\hspace{1sp}\cite{lee88}]\label{d:PE}
  A bounded function $\phi_k:[k_0,\infty)\rightarrow\mathbb{R}^N$ is persistently exciting (PE) at a level $\alpha$  if there exists $\Delta T\hspace{-.045cm}\in\hspace{-.045cm}\mathbb{N}^+$ and $\alpha\hspace{-.045cm}>\hspace{-.045cm}0$ such that
  \begin{equation*}
    \sum_{i=k+1}^{k + \Delta T} \phi_i\phi_i^\top \succeq \alpha I, \;\; \forall k\geq k_0.
  \end{equation*}
\end{definition}


In Definition \ref{d:PE}, the degree of excitation is given by $\alpha$. It can be noted that this PE condition pertains to a property of $\phi$ that needs to hold over a moving window for all $k\geq k_0$. The following matrix inversion lemma is useful in deriving the results that follow.
\begin{lemma}
  \label{lemma:kailath}
  (Kailath Variant of the Woodbury Identity \cite{petersen2008matrix}.)
  \[(A + BC)^{-1} = A^{-1} - A^{-1}B(I + CA^{-1}B)^{-1}CA^{-1}\]
\end{lemma}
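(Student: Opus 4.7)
The plan is straightforward verification: multiply $(A+BC)$ by the claimed inverse on the right and check that the product is $I$. I would expand
\[
(A+BC)\bigl[A^{-1} - A^{-1}B(I+CA^{-1}B)^{-1}CA^{-1}\bigr]
\]
into four terms. The first two simplify immediately to $I + BCA^{-1}$. For the remaining two subtracted terms, I would factor $B$ on the left and $(I+CA^{-1}B)^{-1}CA^{-1}$ on the right, obtaining $-B\bigl(I + CA^{-1}B\bigr)(I+CA^{-1}B)^{-1}CA^{-1} = -BCA^{-1}$. This cancels the $BCA^{-1}$ produced by the first pair, leaving $I$ as required.

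Verifying the product in the reverse order is completely analogous, though not strictly necessary: since all matrices involved are square of the same size, a right inverse is automatically a two-sided inverse.

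The only real subtlety is a dimensional/invertibility one: the identity implicitly requires $A$ and $I+CA^{-1}B$ to be invertible (with $B,C$ of compatible sizes so that $CA^{-1}B$ is square). When those hold, the same calculation also shows that $A+BC$ is invertible and its inverse is given by the claimed expression. There is no substantive obstacle to the proof; the one clever step is the factoring that produces the telescoping cancellation $B(I+CA^{-1}B)(I+CA^{-1}B)^{-1} = B$ in the middle of the expansion.
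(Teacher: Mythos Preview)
Your verification is correct: expanding $(A+BC)\bigl[A^{-1} - A^{-1}B(I+CA^{-1}B)^{-1}CA^{-1}\bigr]$ and factoring as you describe indeed yields $I$, and your remarks on the required invertibility of $A$ and $I+CA^{-1}B$ are appropriate. The paper itself does not supply a proof of this lemma at all---it simply quotes the identity from \cite{petersen2008matrix} as a known tool---so there is nothing further to compare.
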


\section{Problem Formulation}
\label{sec:algorithm}
The class of discrete-time models we consider in this paper is of the form\cite{Goodwin_1984,Landau11}
\begin{equation}
  \label{eq:10}
    y_k = -\sum_{i=1}^{n}a_{i,k}^*y_{k-i} + \sum_{j=1}^{m}b_{j,k}^*u_{k-j-d} + \sum_{\ell=1}^pc^*_{\ell,k}f_\ell(y_{k-1}, \ldots, y_{k-n}, u_{k-1-d}, \ldots, u_{k-m-d}),
\end{equation}
where $a_{i,k}^*$, $b_{j,k}^*$ and $c_{\ell, k}^*$ are unknown parameters that may vary with time and need to be identified, and $d$ is a known time-delay. The function $f_\ell$ is an analytic function of its arguments and is assumed to be such that the system in \eqref{eq:10} is bounded-input-bounded-output (BIBO) stable. Denote $z_{k-1} = [y_{k-1}, \ldots, y_{k-n}]^\top$ and $v_{k-d-1} = [u_{k-1-d}, \ldots, u_{k-m-d}]^\top$. We rewrite \eqref{eq:10} in the form of a linear regression
\begin{equation}
  \label{eq:11}
  y_k = \phi_k^\top\theta_k^*,
\end{equation}
where $\phi_k = [z_{k-1}^\top, v_{k-d-1}^\top, f_1(z_{k-1}^\top, v_{k-d-1}^\top), \ldots, \allowbreak f_p(z_{k-1}^\top, v_{k-d-1}^\top)]^\top$ is a regressor determined by exogenous signals and $\theta_k^* = [a_{1,k}^*, \ldots, a_{n,k}^*, b_{1,k}^*, \ldots, b_{m,k}^*,c_{1,k}^*, \ldots, c_{\ell,k}^*]^\top$ is the underlying unknown parameter vector. We propose to identify the parameter $\theta^*_k$ as $\theta_k$ using an estimator
\begin{equation}
  \hat{y}_k = \phi_k^\top\theta_k.
  \label{eq:estimate}
\end{equation}
This leads to a prediction error
\begin{equation}
  \label{eq:12}
  e_{y,k} = \phi_k^{\top}\tilde{\theta}_k,
\end{equation}
where $e_{y,k} = \hat{y}_k - y_k$ is the output prediction error and the parameter error is $\tilde{\theta}_k = \theta_k - \theta_k^*$. We define the time variation in $\theta_k^*$ as $\Delta \theta^*_k = \theta_k^* - \theta_{k-1}^*$.

\begin{assumption}
  \label{ass:1}
The time variation $\Delta \theta_k^*$ is bounded, i.e.,
\begin{equation}
  \|\Delta\theta_k^*\| \leq \Delta^*.
  \label{eq:para-diff}
\end{equation}
\end{assumption}

\begin{assumption}
  \label{ass:2}
  There exists a $\theta^*_{\max}\geq 0$ such that
\begin{equation}
  \label{eq:para-bound}
  \|\theta_k^*\| \leq \theta_{\max}^*.
\end{equation}
\end{assumption}
In the following sections, we will consider two different cases: In the first case, there is persistent excitation in the input signal while Assumption \ref{ass:1} and Assumption \ref{ass:2} are assumed; In the second case, no persistent excitation is assumed and we only consider the simpler case where $\theta_k^*=\theta^*$, a constant.
\section{Adaptive Laws with a Time-Varying Learning Rate: The Algorithms}
\label{sec:algo}
\subsection{Algorithm 1: For Inputs with Persistent Excitation}
We propose the following recursive algorithm for estimating $\theta_k^*$ as $\theta_k$, for all $k\geq 0$:
\begin{align}
  \theta_{k+1} &= \theta_k -\frac{\lambda_\Gamma\kappa\Gamma_k\phi_ke_k}{1 + \|\phi_k\|^2}, \label{eq:3}\\  
  \Omega_k &= (1-\lambda_\Omega)\Omega_{k-1} + \frac{\phi_k\phi_k^\top}{1 + \|\phi_k\|^2}, \label{eq:1}\\
    \Gamma_k &= \Gamma_{k-1} + \lambda_\Gamma(\Gamma_{k-1} - \kappa\Gamma_{k-1}\Omega_k\Gamma_{k-1}), \label{eq:2}
\end{align}
where $\lambda_\Gamma$, $\kappa$ are positive scalars and $\Gamma_k \in\R^{N\times N}$ is the time-varying gain matrix. The initial value $\Gamma_{0}$ is a symmetric positive definite matrix chosen so that $\Gamma_{0} \preceq \Gamma_{\max}I$, where $\Gamma_{\max}\in\R^+$ is the maximum learning rate that is assumed to be given. The information matrix $\Omega_k\in\R^{N\times N}$ captures any excitation that may be present in the regressor $\phi_k$. The initial value of $\Omega_k$ is a symmetric positive definite matrix with $0 \prec \Omega_0 \preceq I$. The parameter $0 < \lambda_\Omega < 1$ is used for the adjustment of the evolution of $\Omega_k$ and is directly related to the upper bound of $\Omega_k$. The scalars $\lambda_\Gamma$, $\kappa$ and $\lambda_\Omega$ represent various weights of the proposed algorithm. 

The main innovation in the algorithm above is in the evolution of $\Gamma_k$. Unlike the recursive least squares algorithm with forgetting (see \cite{Goodwin_1984} for example), the algorithm in \eqref{eq:2} automatically ensures that $\Gamma_k$ stays within certain bounds, through a combination of linear and nonlinear components. As will be shown in the following section, stability and exponential decrease of parameter error towards a compact set are established under appropriate excitation conditions. In what follows, we assume that the regressors are persistently exciting, satisfying Definition \ref{d:PE}.
\subsection{Algorithm 2: For Inputs with No Excitation}
\setcounter{algorithm}{2}
When there is no persistent excitation in the input signal, we propose the following update of $\Gamma_k$ as a replacement of \eqref{eq:2} in \eqref{eq:3}-\eqref{eq:2}:
\begin{equation}
  \label{eq:39}
  \Gamma_k = \Proj_{\Gamma_{\max}}[\Gamma_{k-1} + \lambda_\Gamma(\Gamma_{k-1} - \kappa\Gamma_{k-1}\Omega_k\Gamma_{k-1})],
\end{equation}
where $\Proj(\cdot)$ is defined as in Algorithm \ref{alg:1}. The stability and convergence properties of these two algorithms are discussed in the following sections.
\begin{algorithm}
	\caption{The Projection Operator $\Gamma_k = \Proj(\Gamma_k')$}\label{alg:1}
	\begin{algorithmic}
		\REQUIRE $\Gamma_{\max}, \Gamma_k' > 0$
		\IF{$\|\Gamma_k'\|_2 > \Gamma_{\max}$}
			\STATE $UDU^\top\leftarrow \Gamma_k'$
			\FOR{$i\leftarrow 1:N$}
				\IF{$D_{i,i} > \Gamma_{\max}$}
					\STATE $D_{i,i}\leftarrow \Gamma_{\max}$
				\ENDIF
			\ENDFOR
			\STATE $\Gamma_k \leftarrow UDU^\top$
		\ELSE
			\STATE $\Gamma_k = \Gamma_k'$
		\ENDIF
	\end{algorithmic}
\end{algorithm}	

\section{Stability and Convergence Analysis for Algorithm 1}
\label{sec:stab-conv-analys}
Before addressing the boundedness of the parameter estimates and their convergence, we establish the boundedness of the information matrix $\Omega_k$ in \eqref{eq:1} and time-varying gain matrix $\Gamma_k$ defined in \eqref{eq:2}.

\subsection{Boundedness of $\Omega_k$}
\label{sec:boundedness-omega_k}
We first show that for any arbitrary regressor $\phi_k$, $\Omega_k$ is a positive semi-definite matrix with an upper bound.
\begin{lemma}
  \label{lemma:1}
  For the dynamic system in \eqref{eq:1}, if $0 < \lambda_\Omega < 1$ and $\Omega_0$ is such that $0\preceq\Omega_0\preceq I$, then $0\preceq \Omega_k\preceq \Omega_{\max}I$, for all $k \geq 1$, where $\Omega_{\max} = 1 / \lambda_\Omega$.
\end{lemma}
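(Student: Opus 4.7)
The plan is to handle the lower and upper bounds on $\Omega_k$ separately, each by induction on $k$, using the simple observation that $\phi_k\phi_k^\top/(1+\|\phi_k\|^2)$ is sandwiched between $0$ and $I$.

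For the positive semi-definite lower bound, the recursion in \eqref{eq:1} writes $\Omega_k$ as a nonnegative linear combination of two PSD pieces: $(1-\lambda_\Omega)\Omega_{k-1}$, which is PSD by the induction hypothesis combined with the fact that $1-\lambda_\Omega > 0$, and the rank-one term $\phi_k\phi_k^\top/(1+\|\phi_k\|^2)$, which is PSD by inspection. Starting from $\Omega_0 \succeq 0$, this immediately yields $\Omega_k \succeq 0$ for all $k\ge 1$.

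For the upper bound, the key fact is the uniform matrix inequality
\[
\frac{\phi_k\phi_k^\top}{1+\|\phi_k\|^2} \;\preceq\; \frac{\|\phi_k\|^2}{1+\|\phi_k\|^2}\, I \;\preceq\; I,
\]
which follows from $\phi_k\phi_k^\top \preceq \|\phi_k\|^2 I$ and $\|\phi_k\|^2 \le 1 + \|\phi_k\|^2$. Assuming inductively that $\Omega_{k-1}\preceq \Omega_{\max}I$, applying this bound to \eqref{eq:1} gives
\[
\Omega_k \;\preceq\; \bigl[(1-\lambda_\Omega)\Omega_{\max} + 1\bigr] I.
\]
Substituting $\Omega_{\max}=1/\lambda_\Omega$ makes the bracketed quantity collapse to $1/\lambda_\Omega = \Omega_{\max}$, closing the induction. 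The base case is handled by noting $\Omega_0 \preceq I \preceq (1/\lambda_\Omega)I = \Omega_{\max} I$, since $\lambda_\Omega \in (0,1)$ forces $\Omega_{\max} > 1$.

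There is no real obstacle here; the only thing to be careful about is choosing $\Omega_{\max}$ so that the recursion is a fixed-point inequality, which is precisely why the value $1/\lambda_\Omega$ appears. Writing the bound $\Omega_{\max} = \sum_{j=0}^\infty (1-\lambda_\Omega)^j = 1/\lambda_\Omega$ gives an intuitive interpretation: $\Omega_{\max}I$ is the steady-state upper envelope produced when the normalized regressor outer product is replaced by its worst-case ceiling $I$ at every step.
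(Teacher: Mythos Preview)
Your proof is correct. For the lower bound $\Omega_k\succeq 0$ your induction argument is essentially identical to the paper's. For the upper bound you take a slightly different route: you argue inductively, using that $(1-\lambda_\Omega)\Omega_{\max}+1=\Omega_{\max}$ is a fixed point of the worst-case recursion, whereas the paper instead unrolls \eqref{eq:1} via convolution to obtain
\[
\Omega_k \preceq (1-\lambda_\Omega)^k\Omega_0 + \sum_{i=0}^{k-1}(1-\lambda_\Omega)^i\,I
\]
and then bounds the finite geometric sum by $1/\lambda_\Omega$. Your inductive step is a bit cleaner and avoids writing out the closed-form solution; the paper's version, on the other hand, makes the origin of $\Omega_{\max}=1/\lambda_\Omega$ as the geometric-series limit explicit from the start rather than relying on it as a guessed invariant (though you recover exactly this interpretation in your final remark). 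Both arguments are elementary and equally valid.
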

\begin{proof}
 We consider the lower bound and the upper bound of $\Omega_k$ separately.
 \begin{enumerate}
 \item First we prove $\Omega_k\succeq 0$ by induction. Let $v\in\R^N$ and $v\neq0$. Since $0<\lambda_\Omega < 1$ and $\Omega_0\succeq 0$,
   \begin{align*}
     v^\top\Omega_1v = (1 - \lambda_\Omega) v^\top\Omega_0v + \frac{\|\phi_1^\top v\|^2}{1 + \|\phi_1\|^2} \geq 0
   \end{align*}
   Therefore, $\Omega_1\succeq 0$. Similarly if $0\preceq\Omega_{k-1}\preceq \frac{1}{\lambda_\Omega} I$, it follows that
   \begin{equation*}
     v^\top\Omega_k v = (1 - \lambda_\Omega)v^\top\Omega_{k-1}v + \frac{\|\phi_k^\top v\|^2}{1 + \|\phi_{k}\|^2}\geq 0.
   \end{equation*}
 \item The lower bound $\Omega_k \succeq 0$ is immediate when applying the definition of positive-semidefinite matrices.
 \item Next we prove $\Omega_k\preceq \Omega_{\max}I$. Using convolution, $\Omega_k$ in \eqref{eq:1} can be written as
   \begin{align*}
     \Omega_k &= (1-\lambda_\Omega)^k\Omega_0 + \sum_{i=0}^{k-1}(1-\lambda_\Omega)^{k-1-i}\frac{\phi_i\phi_i^\top}{1 + \|\phi_i\|^2}\\
              &\preceq (1-\lambda_\Omega)^k \Omega_0 + I\sum_{i=0}^{k-1}(1-\lambda_\Omega)^i.
   \end{align*}
   Since $\Omega_0\leq I$, using simple algebra, we can then show that $\Omega_k\leq \Omega_{\max} I$, for all $k\geq 0$.
 \end{enumerate}
\end{proof}
Lemma \ref{lemma:1} guarantees that $\Omega_k$ in \eqref{eq:1} is a BIBO-stable system and determines an explicit upper bound of $\Omega_k$. In the next lemma, we focus on its lower bound, and show that $\Omega_k$ is positive definite, under persistent excitation (PE) conditions.
\begin{lemma}
  If $0 < \lambda_\Omega < 1$, and $\phi_k$ is persistently exciting over a period $\Delta T$ at a level $\alpha$ for all $k\geq k_1$, then $\Omega_k\succeq \Omega_{PE}I$, $\forall k\geq k_2$, where $\Omega_{PE} = (1 - \lambda_\Omega)^{\Delta T - 1}\alpha/d$, $d = \max_{i\geq k_1}\{1 + \|\phi_i\|^2\}$ and $k_2 = k_1 + \Delta T$.
  \label{lemma:3}
\end{lemma}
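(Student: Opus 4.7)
The plan is to exploit the linearity of the recursion \eqref{eq:1} and then invoke the PE hypothesis on the most recent window of length $\Delta T$.

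First I would unroll \eqref{eq:1}, exactly as in the proof of Lemma \ref{lemma:1}, to obtain the convolution-form expression
\begin{equation*}
\Omega_k = (1-\lambda_\Omega)^k \Omega_0 + \sum_{i=1}^{k}(1-\lambda_\Omega)^{k-i}\frac{\phi_i\phi_i^\top}{1+\|\phi_i\|^2}.
\end{equation*}
Since $\Omega_0 \succeq 0$ and every summand is positive semi-definite, for any $k \geq k_2 = k_1 + \Delta T$ I can retain only the indices $i \in \{k-\Delta T + 1, \ldots, k\}$ and only weaken the lower bound:
\begin{equation*}
\Omega_k \;\succeq\; \sum_{i=k-\Delta T + 1}^{k} (1-\lambda_\Omega)^{k-i}\,\frac{\phi_i\phi_i^\top}{1+\|\phi_i\|^2}.
\end{equation*}

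Within this finite window the exponential weight $(1-\lambda_\Omega)^{k-i}$ is minimized at $i = k-\Delta T + 1$, where it equals $(1-\lambda_\Omega)^{\Delta T - 1}$. Pulling that scalar out in front and then applying Definition \ref{d:PE} to the window $[k-\Delta T + 1, k]$, which is a legitimate PE window precisely because $k \geq k_2$ forces the starting index $k-\Delta T$ to be at least $k_1$, yields
\begin{equation*}
\Omega_k \;\succeq\; (1-\lambda_\Omega)^{\Delta T - 1}\alpha\, I \;=\; \Omega_{PE}\, I,
\end{equation*}
which is the desired bound.

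The one point requiring care is the normalization $1 + \|\phi_i\|^2$ that appears in each summand, since Definition \ref{d:PE} is stated for the unnormalized sum $\sum \phi_i\phi_i^\top$. Because the PE definition requires $\phi_k$ to be a \emph{bounded} function, $1+\|\phi_i\|^2$ is uniformly bounded above, so the normalization only contributes a multiplicative constant that can be absorbed into the excitation level $\alpha$; equivalently, one reads the PE condition as applying to the normalized regressor $\phi_k/\sqrt{1+\|\phi_k\|^2}$, which is the quantity that actually drives the update of $\Omega_k$. Beyond this bookkeeping, no real obstacle is expected, as the argument is a straightforward windowing estimate on a linear, exponentially weighted moving average.
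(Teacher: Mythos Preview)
Your argument is correct and is essentially identical to the paper's own proof: both unroll \eqref{eq:1} into its convolution form, discard all but the most recent $\Delta T$ summands, replace the exponential weights by their minimum $(1-\lambda_\Omega)^{\Delta T-1}$, and invoke the PE condition on that window. Your explicit remark about absorbing the normalization $1+\|\phi_i\|^2$ into the excitation level $\alpha$ is in fact more careful than the paper, which silently applies the PE inequality to the normalized regressor without comment.
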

\begin{proof}
 Using convolution, $\Omega_k$ can be written as
 \begin{align*}
   \Omega_k &=(1-\lambda_\Omega)^{k}\Omega_0 + \sum_{i=0}^{k-1} (1-\lambda_\Omega)^{k-i-1}\frac{\phi_i\phi_i^\top}{1 + \|\phi_i\|^2}
 \end{align*}
 Since $\phi_k$ is persistently exciting,
 \begin{align*}
   \Omega_k 
   &\succeq (1-\lambda_\Omega)^{k-1}\sum_{i=k-\Delta T}^{k-1}(1-\lambda_\Omega)^{-i}\frac{\phi_i\phi_i^\top}{1 + \|\phi_i\|^2}\\
   &\succeq \Omega_{PE} I
 \end{align*}
\end{proof}
The effects of persistent excitation on the regressor, $\Omega_k$ and $\Gamma_k$ are shown in Table \ref{tab:prop2}.

\begin{table}[!htb]
  \centering
  \begin{tabular}{c c c c}
    \toprule[1.5pt]
    $k\in$& $[k_1, k_2]$& $k_2$ & $[k_2, \infty)$ \\
    \hline
    $\|\phi_k\phi_k^\top\|$ & $\geq 0$ & & \\
    $\|\sum_{k=k_1}^{k_2}\phi_k\phi_k^\top\|$ & & $\geq \alpha$ & \\
    $\|\Omega_k\|$ & & & $\geq \Omega_{PE}$\\
    $\|\Gamma_k\|$ & & & $\geq \Gamma_{PE}$\\
    \bottomrule[1.5pt]
  \end{tabular}
  \caption{Norm of Signals in Different Phases of PE Propagation}
  \label{tab:prop2}
\end{table}


\subsection{Boundedness of $\Gamma_k$}
\label{sec:boundedness-gamma_k}
One of the major contributions of this paper is the adjustment of the time-varying learning rate matrix $\Gamma_k$ in \eqref{eq:2}. The update in \eqref{eq:2} allows the linear part and the quadratic part to automatically balance each other so that $\Gamma_k$ is always bounded under conditions of persistent excitation. In this section we will show this result formally.

We first consider the scalar case of \eqref{eq:2}, given by
\begin{equation}
  \label{eq:4}
  \gamma_k = \gamma_{k-1} + \lambda_\Gamma(\gamma_{k-1} - \kappa\omega_k\gamma_{k-1}^2),
\end{equation}
where
\begin{equation}
0<\omega_{\min}\leq\omega_k\leq\omega_{\max}.\label{eq:omega_b}
\end{equation}
Let $\gamma_{\max} > 0$ and
\begin{equation}
  \gamma_{\min} = \min\left\{\frac{1}{\kappa\omega_{\max}}, \gamma_{\max} + \lambda_\Gamma(\gamma_{\max} - \kappa\omega_{\max}\gamma_{\max}^2)\right\}
  \label{eq:gammabound}
\end{equation}
where $\kappa_{\min} \leq \kappa < \kappa_{\max}$, and $\kappa_{\min}$, $\kappa_{\max}$ are defined as
\begin{equation}
  \kappa_{\min} = \frac{1}{\gamma_{\max}\omega_{\min}}, \;\; \kappa_{\max} = \frac{1 + \lambda_\Gamma}{\lambda_\Gamma\omega_{\max}\gamma_{\max}}.
  \label{eq:kappa}
\end{equation}
In the update of \eqref{eq:4}, $\gamma_{\max}$ and $\lambda_\Gamma$ are chosen arbitrarily by the user. The gain $\kappa$ is then chosen using the bounds in \eqref{eq:kappa}.
We now state the following lemma that establishes bounds  of $\gamma_k$ (See Fig. \ref{fig:parabola} and Fig. \ref{fig:parabola2} for an illustration of the bounds) which in turn depend on $\lambda_\Gamma, \gamma_{\max}$ and $\kappa$.
\begin{figure}[!htb]
  \centering
  \includegraphics[width=0.7\linewidth]{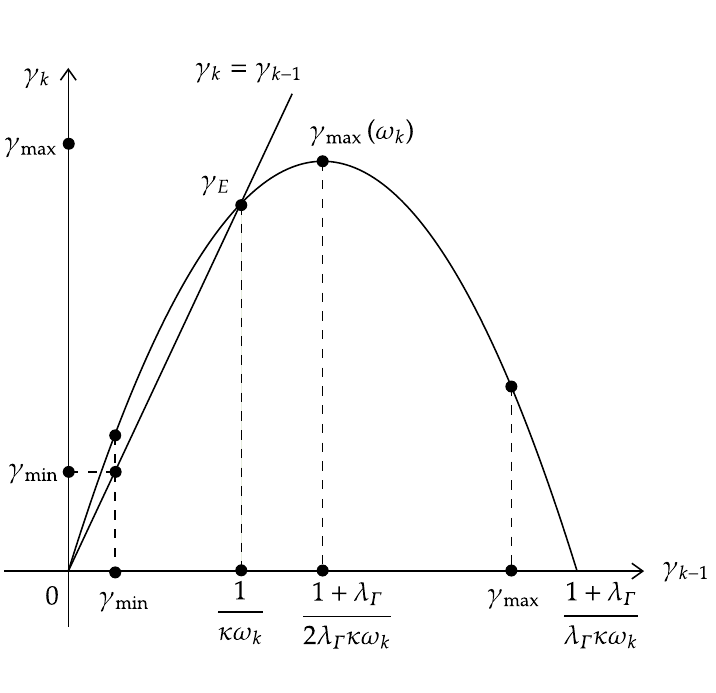}
  \caption{A parabola for illustrating the bounds of $\gamma_k$ whose update is in \eqref{eq:4} for the case when $\gamma_{\max}>\frac{1 + \lambda_\Gamma}{2\lambda_\Gamma\kappa\omega_k}$. For a given $\omega_k$, the line $\gamma_k = \gamma_{k-1}$ intersects the parabola at $\gamma_E$, giving $\gamma_k = \gamma_{k-1} = \frac{1}{\kappa\omega_k}$. When the parabola is above the straight line, $\gamma_k\geq\gamma_{k-1}$; Otherwise $\gamma_k \leq \gamma_{k-1}$. The point $\gamma_{\max}(\omega_k)$ is the maximum obtainable value for $\gamma_k$ and it will be no larger than $\gamma_{\max}$. When $\gamma_{k-1}=\gamma_{\max}$, the resulting $\gamma_k$ will be no less than $\gamma_{\min}$.}
  \label{fig:parabola}
\end{figure}
\begin{figure}[!htb]
  \centering
  \includegraphics[width=0.7\linewidth]{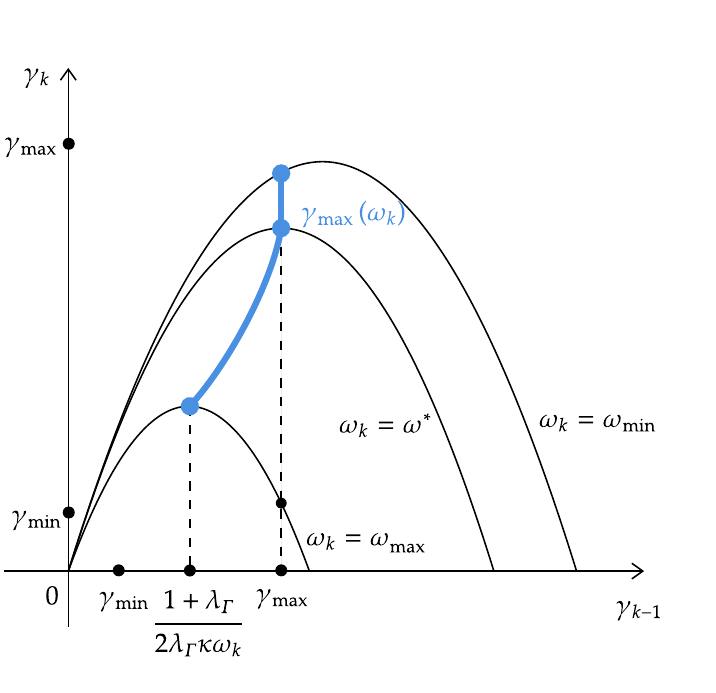}
  \caption{Illustration of the upper bounds of $\gamma_k$ as $\omega_k$ varies. The blue curve shows how $\gamma_{\max}(\omega_k)$ changes. The value of $\omega^*$ is defined such that \eqref{eq:omegaStar} is satisfied. Under the assumptions for Lemma \ref{lemma:bound_g}, $\gamma_{\max}(\omega_k)$ gets larger as $\omega_k$ decreases. The global maximum can be obtained when $\omega_k = \omega_{\min}$ and $\gamma_{k-1} = \gamma_{\max}$.}
  \label{fig:parabola2}
\end{figure}
\begin{lemma}
  If $0 < \lambda_\Gamma < \min\left\{\frac{1}{\omega_{\max}/\omega_{\min}-1}, 1\right\}$,  and  $\gamma_{\min} \leq \gamma_0 \leq \gamma_{\max}$, then the solutions of \eqref{eq:4} are such that $0< \gamma_{\min} \leq \gamma_k \leq \gamma_{\max}$, $\forall k\geq 0$.
  \label{lemma:bound_g}
\end{lemma}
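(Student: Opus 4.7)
The plan is to proceed by induction on $k$, showing that the interval $[\gamma_{\min}, \gamma_{\max}]$ is positively invariant under the one-step map $\gamma \mapsto g_{\omega}(\gamma) := (1+\lambda_\Gamma)\gamma - \lambda_\Gamma\kappa\omega\gamma^{2}$ for every admissible $\omega\in[\omega_{\min},\omega_{\max}]$. The base case $\gamma_{0}\in[\gamma_{\min},\gamma_{\max}]$ is an assumption, so everything reduces to the inductive step. The critical observation is that, for each fixed $\omega$, $g_{\omega}$ is a downward-opening parabola with roots $0$ and $(1+\lambda_\Gamma)/(\lambda_\Gamma\kappa\omega)$ and vertex at $\gamma^{*}(\omega):=(1+\lambda_\Gamma)/(2\lambda_\Gamma\kappa\omega)$, attaining height $\tfrac{1+\lambda_\Gamma}{2}\gamma^{*}(\omega)$ there. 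These closed-form expressions drive every case split and are exactly what the figures \ref{fig:parabola} and \ref{fig:parabola2} are illustrating.

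For the upper bound $\gamma_k\le\gamma_{\max}$, I split into three cases according to where $\gamma^{*}(\omega_k)$ sits relative to $[\gamma_{\min},\gamma_{\max}]$. When $\gamma^{*}(\omega_k)\ge\gamma_{\max}$, $g_{\omega_k}$ is monotone increasing on the interval, so the maximum is $g_{\omega_k}(\gamma_{\max})$, and the identity $g_{\omega_k}(\gamma_{\max})-\gamma_{\max}=\lambda_\Gamma\gamma_{\max}(1-\kappa\omega_k\gamma_{\max})$ is nonpositive because $\kappa\ge\kappa_{\min}=1/(\gamma_{\max}\omega_{\min})$ combined with $\omega_k\ge\omega_{\min}$ forces $\kappa\omega_k\gamma_{\max}\ge 1$. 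When $\gamma^{*}(\omega_k)\in[\gamma_{\min},\gamma_{\max}]$, the maximum of $g_{\omega_k}$ is the vertex value $\tfrac{1+\lambda_\Gamma}{2}\gamma^{*}(\omega_k)$, and $\lambda_\Gamma<1$ immediately gives $\tfrac{1+\lambda_\Gamma}{2}<1$, so this value is bounded by $\gamma^{*}(\omega_k)\le\gamma_{\max}$. When $\gamma^{*}(\omega_k)<\gamma_{\min}$, the parabola is decreasing on the interval; the constraint vertex-below-$\gamma_{\min}$ rearranges to $\kappa\omega_k\gamma_{\min}\ge(1+\lambda_\Gamma)/(2\lambda_\Gamma)\ge 1$, which forces $g_{\omega_k}(\gamma_{\min})\le\gamma_{\min}\le\gamma_{\max}$.

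For the lower bound $\gamma_k\ge\gamma_{\min}$, the simplification is again that a downward parabola attains its minimum on an interval at an endpoint. The endpoint $\gamma_{\min}$ gives $g_{\omega_k}(\gamma_{\min})\ge\gamma_{\min}\iff\kappa\omega_k\gamma_{\min}\le 1$ for every $\omega_k$; the worst case $\omega_k=\omega_{\max}$ is precisely what the first term $1/(\kappa\omega_{\max})$ in the definition \eqref{eq:gammabound} of $\gamma_{\min}$ accommodates. The endpoint $\gamma_{\max}$ gives $g_{\omega_k}(\gamma_{\max})\ge\gamma_{\min}$; since $g_\omega(\gamma_{\max})$ is decreasing in $\omega$, the worst case $\omega_k=\omega_{\max}$ reproduces the second term of \eqref{eq:gammabound}. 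Strict positivity $\gamma_{\min}>0$ follows because $\kappa<\kappa_{\max}$ keeps $g_{\omega_{\max}}(\gamma_{\max})$ strictly positive, while $1/(\kappa\omega_{\max})$ is trivially positive.

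The final housekeeping is the role of the condition $\lambda_\Gamma<1/(\omega_{\max}/\omega_{\min}-1)$: rearranging gives $\kappa_{\min}\le\kappa_{\max}$, so this condition is exactly what guarantees the admissible range for $\kappa$ is nonempty. I expect the principal obstacle to be the bookkeeping of the upper-bound case split rather than any single computation being difficult; the challenge is to verify that each piece of the hypothesis on $\kappa$ and $\lambda_\Gamma$ is genuinely used and that the three cases exhaust all possibilities. Throughout, the two parabola figures should serve as a reliable geometric guide.
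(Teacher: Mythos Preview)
Your proposal is correct and follows essentially the same geometric idea as the paper: analyze the one-step map $g_\omega(\gamma)=(1+\lambda_\Gamma)\gamma-\lambda_\Gamma\kappa\omega\gamma^{2}$ as a downward parabola and show $[\gamma_{\min},\gamma_{\max}]$ is invariant by locating the vertex and evaluating at the endpoints. The only organizational difference is that the paper case-splits on $\omega_k$ relative to the threshold $\omega^*$ defined by \eqref{eq:omegaStar} (two cases) while you case-split on the vertex location relative to $[\gamma_{\min},\gamma_{\max}]$ (three cases, the third of which is in fact vacuous under the hypotheses); your version is slightly more explicit in identifying which constraint on $\kappa$ or $\lambda_\Gamma$ is consumed at each step, and your lower-bound argument (checking both endpoints) spells out what the paper summarizes as ``the largest decrease occurs when $\gamma_k$ is at its maximum.''
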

\begin{proof}
 Defining the right hand side of Eq. \eqref{eq:4} as $f(\gamma_{k-1}, \omega_k)$, it is easy to see that $f(.,.)$ is quadratic in $\gamma_{k-1}$, with one maximum and two minima (see Fig. \ref{fig:parabola} and Fig. \ref{fig:parabola2}). Suppose $\omega^*$ is such that
 \begin{equation}
 \label{eq:omegaStar}
   \frac{1 + \lambda_\Gamma}{2\lambda_\Gamma\kappa\omega^*} = \gamma_{\max}.
 \end{equation}
 It is easy to check that $\omega^* \geq \omega_{\min}$. If $\omega_k\geq \omega^*$,
 \begin{align*}
   f(\gamma_{k-1},\omega_k) &\leq \gamma_{\max} + \lambda_\Gamma(\gamma_{\max} - \kappa\omega^*\gamma_{\max}^2)\\
   &\leq \gamma_{\max} + \lambda_\Gamma(\gamma_{\max} - \kappa\omega_{\min}\gamma_{\max}^2).
 \end{align*}
 If $\omega_k \leq \omega^*$,
 \begin{align*}
   f(\gamma_{k-1},\omega_k) &\leq \gamma_{\max} + \lambda_\Gamma(\gamma_{\max} - \kappa\omega_k\gamma_{\max}^2)\\
   &\leq \gamma_{\max} + \lambda_\Gamma(\gamma_{\max} - \kappa\omega_{\min}\gamma_{\max}^2).
 \end{align*}
 Noticing the bounds of $\lambda_\Gamma$ and $\kappa$, simple algebraic manipulations show that
 \begin{align*}
   \max_{\gamma_k,\omega_k}f(\gamma_{k-1},\omega_k) &= \gamma_{\max} + \lambda_\Gamma(\gamma_{\max} - \kappa\omega_{\min}\gamma_{\max}^2)\\
   &\leq \gamma_{\max}.
 \end{align*}
Similarly, it can be shown by inspection that 
\begin{equation*}
 \min_{\omega_k}f(\gamma_{k-1},\omega_k)=\gamma_{k-1} + \lambda_\Gamma(\gamma_{k-1} - \kappa\omega_{\max}\gamma_{k-1}^2).
\end{equation*}
Since the largest decrease of $f$ occurs when $\gamma_{k}$ is at its maximum value, it follows that $$\min_{\gamma_k,\omega_k}f(\gamma_{k-1},\omega_k)= \gamma_{\max} + \lambda_\Gamma(\gamma_{\max} - \kappa\omega_{\max}\gamma_{\max}^2).$$
 In addition, $\kappa\omega_{\max}>0$. Therefore, the lower bound of $\gamma_k$ and that it is positive are ensured through the choice of $\gamma_{\min}$ as in \eqref{eq:gammabound}.
\end{proof}
\begin{remark}
  It should also be noted that \eqref{eq:omega_b} implies that $\omega_k$ is persistently exciting, i.e., we consider the period $k\in[k_2, \infty)$ as shown in Table \ref{tab:prop2}.
\end{remark}
\begin{remark}
As we will show next, the properties we obtain in Lemma \ref{lemma:bound_g} for the scalar $\gamma_k$ will be directly utilized in showing the properties of the matrix $\Gamma_k$. Specifically, it will be shown that each eigenvalue of $\Gamma_k$ has an update similar to \eqref{eq:4}.
\end{remark}

We now proceed to the matrix case. First we consider the case when $\phi_k$ is persistently exciting and thus $\Omega_{PE}I \preceq \Omega_k\preceq \Omega_{\max}I$, $\forall k\geq k_2$, as shown in Lemma \ref{lemma:3}. Let $\Gamma_{\max}> 0$, $\kappa_{\min} = \frac{1}{\Gamma_{\max}\Omega_{PE}}$, $\kappa_{\max} = \frac{1+\lambda_\Gamma}{\lambda_\Gamma\Omega_{\max}\Gamma_{\max}}$, $\Gamma_{PE} = \min\left\{\frac{1}{\kappa\Omega_{\max}}, \Gamma_{\max} + \lambda_\Gamma(\Gamma_{\max} - \kappa\Omega_{\max}\Gamma_{\max}^2)\right\}$, the following Lemma gives the bounds on $\Gamma_k$ for $k \geq k_2$.
\begin{lemma}
  If $\phi_k$ is persistently exciting, $\Omega_{PE}I \preceq \Omega_k\preceq \Omega_{\max}I$, $\forall k\geq k_2$, learning rate gain $0 < \lambda_\Gamma < \min\left\{\frac{1}{\Omega_{\max}/\Omega_{PE}-1}, 1\right\}$, gain $\kappa_{\min}\leq\kappa < \kappa_{\max}$, and initial value $\Gamma_{PE}I \preceq \Gamma_{k_2} \preceq \Gamma_{\max}I$, then $\Gamma_{PE}I \preceq \Gamma_k \preceq \Gamma_{\max}I$, $\forall k > k_2$ for some $k_2 > k_1$.
  \label{lemma:5}
\end{lemma}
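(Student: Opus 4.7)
The plan is to proceed by induction on $k\geq k_2$, with base case provided by the hypothesis $\Gamma_{PE} I \preceq \Gamma_{k_2} \preceq \Gamma_{\max} I$. For the inductive step I would assume $\Gamma_{PE} I \preceq \Gamma_{k-1} \preceq \Gamma_{\max} I$---symmetry of $\Gamma_k$ is automatic since $\Gamma_{k-1}\Omega_k\Gamma_{k-1}$ is symmetric whenever $\Gamma_{k-1}$ and $\Omega_k$ are---and bound the Rayleigh quotient $u^\top\Gamma_k u$ for an arbitrary unit vector $u\in\R^N$. The starting point is the one-step decomposition
\[
u^\top \Gamma_k u = (1+\lambda_\Gamma)\, a - \lambda_\Gamma \kappa\, b, \qquad a := u^\top \Gamma_{k-1} u,\quad b := u^\top \Gamma_{k-1}\Omega_k\Gamma_{k-1} u,
\]
where $a\in[\Gamma_{PE},\Gamma_{\max}]$ by the inductive hypothesis. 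The task then reduces to sandwiching $b$ between tight functions of $a$.

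For the upper bound $\lambda_{\max}(\Gamma_k)\leq\Gamma_{\max}$, I would lower-bound $b$ using $\Omega_k\succeq\Omega_{PE}I$ together with the Cauchy--Schwarz estimate $u^\top\Gamma_{k-1}^2 u \geq (u^\top\Gamma_{k-1}u)^2$ for unit $u$, giving $b\geq\Omega_{PE}a^2$. This dominates $u^\top\Gamma_k u$ by the scalar parabola $g(a)=(1+\lambda_\Gamma)a-\lambda_\Gamma\kappa\Omega_{PE}a^2$, which is precisely one step of the iteration of Lemma~\ref{lemma:bound_g} evaluated at $\omega=\Omega_{PE}$. Invoking that lemma (or, equivalently, maximizing $g$ over $[\Gamma_{PE},\Gamma_{\max}]$ directly: $g(\Gamma_{\max})\leq\Gamma_{\max}$ follows from $\kappa\geq\kappa_{\min}$, and any interior maximum is controlled by $\lambda_\Gamma<1$) yields $g(a)\leq\Gamma_{\max}$, so $u^\top\Gamma_k u\leq\Gamma_{\max}$.

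For the lower bound $\lambda_{\min}(\Gamma_k)\geq\Gamma_{PE}$, I would upper-bound $b$ using $\Omega_k\preceq\Omega_{\max}I$ together with the Kantorovich-type operator inequality
\[
\Gamma_{k-1}^2 \preceq (\Gamma_{PE}+\Gamma_{\max})\,\Gamma_{k-1} - \Gamma_{PE}\Gamma_{\max}\, I,
\]
which holds because every eigenvalue $\lambda$ of $\Gamma_{k-1}$ lies in $[\Gamma_{PE},\Gamma_{\max}]$ and hence satisfies $(\lambda-\Gamma_{PE})(\lambda-\Gamma_{\max})\leq 0$. Substituting into the Rayleigh decomposition produces a lower bound on $u^\top\Gamma_k u$ that is \emph{affine} in $a$, so its minimum over $[\Gamma_{PE},\Gamma_{\max}]$ is attained at one of the two endpoints. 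At $a=\Gamma_{PE}$ the bound simplifies, after cancellation of the cross terms, to $\Gamma_{PE}\bigl[(1+\lambda_\Gamma)-\lambda_\Gamma\kappa\Omega_{\max}\Gamma_{PE}\bigr]$, which is at least $\Gamma_{PE}$ because $\Gamma_{PE}\leq 1/(\kappa\Omega_{\max})$ is the first term in the $\min$ defining $\Gamma_{PE}$. At $a=\Gamma_{\max}$ the bound simplifies to $\Gamma_{\max}+\lambda_\Gamma(\Gamma_{\max}-\kappa\Omega_{\max}\Gamma_{\max}^2)$, which is exactly the second term and therefore also at least $\Gamma_{PE}$.

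The main obstacle is that $\Gamma_{k-1}$ and $\Omega_k$ do not commute in general, so the scalar dynamics of Lemma~\ref{lemma:bound_g} cannot literally be applied coordinate-wise to the eigenvalues of $\Gamma_k$; the remark preceding the statement is morally but not literally correct. The Cauchy--Schwarz and Kantorovich-type operator inequalities are what make the reduction rigorous, and it is satisfying that the two candidates in the $\min$ defining $\Gamma_{PE}$ correspond exactly to the two endpoints of the affine minimization arising in the lower-bound step.
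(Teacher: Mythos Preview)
Your proof is correct, and the upper-bound half is essentially the paper's argument recast in Rayleigh-quotient language. The paper, however, avoids the Kantorovich-type detour in the lower-bound step. Its observation is that once you invoke $\Omega_k\preceq\Omega_{\max}I$ (respectively $\Omega_k\succeq\Omega_{PE}I$), the non-commutativity you flag as the ``main obstacle'' already disappears: the inequality $\Gamma_{k-1}\Omega_k\Gamma_{k-1}\preceq\Omega_{\max}\Gamma_{k-1}^2$ gives the operator bound
\[
\Gamma_k \;\succeq\; \Gamma_{k-1}+\lambda_\Gamma\bigl(\Gamma_{k-1}-\kappa\Omega_{\max}\Gamma_{k-1}^2\bigr),
\]
and the right-hand side is a \emph{polynomial in $\Gamma_{k-1}$}, hence diagonalizes in the same orthogonal basis. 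Writing $\Gamma_{k-1}=Q\Lambda Q^\top$, the bound becomes $Q\bigl[\Lambda+\lambda_\Gamma(\Lambda-\kappa\Omega_{\max}\Lambda^2)\bigr]Q^\top$, and Lemma~\ref{lemma:bound_g} now applies \emph{literally}, entry by entry, with $\omega_k=\Omega_{\max}$. So the remark you describe as ``morally but not literally correct'' is in fact fully rigorous once the scalar bound on $\Omega_k$ has been substituted; your Kantorovich inequality $(\lambda-\Gamma_{PE})(\lambda-\Gamma_{\max})\leq 0$ recovers the same two endpoint values but is not needed. What your route buys is that it stays entirely at the level of the single scalar $a=u^\top\Gamma_{k-1}u$; what the paper's buys is that the scalar lemma does all the work, with no auxiliary matrix inequality beyond the sandwich on $\Omega_k$.
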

\begin{proof}
 We first establish the lower bound $\Gamma_{PE}$ using induction: Let $\Gamma_{PE}I \preceq \Gamma_{k-1} \preceq \Gamma_{\max}I$. It is easy to check that $\Gamma_{PE}> 0$. Since $\Gamma_{k-1}$ is positive definite, we can factorize $\Gamma_{k-1}$ as\cite{axler15}
 \begin{equation}
   \Gamma_{k-1} = Q_{k-1}\Lambda_{k-1}Q_{k-1}^\top
   \label{eq:18}
 \end{equation}
 where $\Lambda_{k-1}$ is diagonal and all of its entries are the eigenvalues of $\Gamma_{k-1}$, and $Q_{k-1}$ is orthogonal.
 Using Eq. \eqref{eq:2}  and the fact that $Q_{k-1}$ is orthogonal, we obtain that
 \begin{align*}
   \Gamma_{k} &= \Gamma_{k-1} + \lambda_\Gamma(\Gamma_{k-1} - \kappa\Gamma_{k-1}\Omega_k\Gamma_{k-1})\\
              &\succeq \Gamma_{k-1} + \lambda_\Gamma(\Gamma_{k-1} - \kappa\Omega_{\max}\Gamma_{k-1}^2)\\
              &= Q_{k-1}\left[\Lambda_{k-1} + \lambda_\Gamma(\Lambda_{k-1} - \kappa\Omega_{\max}\Lambda_{k-1}^2)\right]Q_{k-1}^\top,
 \end{align*}
 since
 \begin{equation*}
   x^\top(\Omega_{\max} I - \Omega_{k})x \geq 0, \; \forall x\neq 0.
 \end{equation*}
 Since $\Gamma_{k-1}$ is positive definite, it follows that
 \begin{equation*}
   v^\top\Gamma_{k-1}(\Omega_{\max} I - \Omega_{k})\Gamma_{k-1}v \geq 0, \; \forall v\neq 0.
 \end{equation*}
 Thus
 \begin{equation}
   \label{eq:5}
   \Gamma_{k-1}(\Omega_{\max}I - \Omega_{k})\Gamma_{k-1} \succeq 0.
 \end{equation}

 Denote the $i$th diagonal element of $\Gamma_{k-1}$ as $\lambda_{k-1,i}$, $i=1, \cdots, N$. Since $\Gamma_{PE} I \preceq \Gamma_{k-1} \preceq \Gamma_{\max} I$, we have $\Gamma_{PE}\leq\lambda_{k-1, i}\leq\Gamma_{\max}$, $\forall i$. Applying Lemma \ref{lemma:bound_g}, we obtain $\Gamma_{PE}\leq\lambda_{k, i}\leq\Gamma_{\max}$, $\forall i$ and the lower bound of $\Gamma_k$ is proved. Since the lower bound holds for $k=k_2$, proof by induction guarantees that the lower bound holds for all $k\geq k_2$.

To prove the upper bound, similar to Eq. \eqref{eq:5}, using Lemma \ref{lemma:3} we can show that
 \begin{equation}
   \label{eq:6}
   \Gamma_{k-1}(\Omega_{PE}I - \Omega_k)\Gamma_{k-1} \preceq 0.
 \end{equation}
 Therefore
 \begin{align*}
   \Gamma_{k} \preceq Q_{k-1}\left[\Lambda_{k-1} + \lambda_\Gamma(\Lambda_{k-1} - \kappa\Omega_{PE} \Lambda_{k-1}^2)\right]Q_{k-1}^\top.
 \end{align*}
Using the same procedure as above, it can be shown that the upper bound holds for $\Gamma_k$ if it holds for $\Gamma_{k-1}$, and using induction that the upper bound holds for all $k$, Lemma \ref{lemma:5} is proved.
\end{proof}
\begin{remark}
  The condition $\Omega_{PE} > \frac{1}{4}\Omega_{\max}$ in Lemma \ref{lemma:5} indicates that the minimum eigenvalue and the maximum eigenvalue should not be too far apart. This may seem to be restricted, but it should be noted that, when they are very far apart, there will be a similar problem of the disparage of convergence rates for methods like covariance resetting: Some parts of the covariance matrix are decreasing to zero quickly due to the large eigenvalues, while other parts will decrease slowly.
\end{remark}
\begin{remark}
Note that Lemma \ref{lemma:5} is essentially an extension of Lemma \ref{lemma:bound_g}. The extension to the matrix $\Gamma_k$ from scalar $\gamma_k$ has been carried out by diagonalization and applying Lemma \ref{lemma:bound_g}. In Lemma \ref{lemma:5}, it should be noted that $\Omega_{PE}$ is corresponding to $\omega_{\min}$ in Lemma \ref{lemma:bound_g}.
\end{remark}

\subsection{Stability and Parameter Convergence}
\label{sec:param-conv}
We now return to the main algorithm in \eqref{eq:3}, \eqref{eq:1} and \eqref{eq:2}. Note that in Section \ref{sec:boundedness-omega_k} and Section \ref{sec:boundedness-gamma_k} we have shown that both $\Omega_k$ and $\Gamma_k$ are bounded under persistent excitation. In this section we consider the boundedness of $\theta_k$ as well as its convergence.

We first introduce an intermediate variable in the evolution of $\Gamma_k$ in \eqref{eq:2} in the form of a matrix $\bar{\Gamma}_k$,
\begin{equation}
  \bar{\Gamma}_k = \Gamma_{k-1} - \lambda_\Gamma\kappa\Gamma_{k-1}\frac{\phi_{k-1}\phi_{k-1}^\top}{1 + \|\phi_{k-1}\|^2}\Gamma_{k-1},
  \label{eq:bar_gamma_pe}
\end{equation}
and consider a Lyapunov function candidate
\begin{equation}
  \label{eq:lya}
  V_{k} = \tilde{\theta}^\top_{k}\bar{\Gamma}_{k}^{-1}\tilde{\theta}_{k}.
\end{equation}
Under conditions of excitation, $V_k$ will be shown to be a Lyapunov function and converge to a compact set that scales with $\Delta^*$. 

In the case of persistent excitation, as proved in Lemma \ref{lemma:3}, $\Omega_{PE}I \preceq \Omega_k$, $\forall k\geq k_2$ for some $k_2 > 0$. Let $\lambda_\Omega$ satisfy
\begin{equation}
  \alpha\lambda_\Omega(1 - \lambda_\Omega)^{\Delta T - 1} > d,
  \label{eq:lOPE}
\end{equation}
where $\Delta T$, $\alpha$ and $d$ are defined in Lemma \ref{lemma:3}. Assume that $\lambda_\Gamma$ satisfies
\begin{equation}
  0 < \lambda_{\Gamma} < \min\left\{\frac{1}{\Omega_{\max}/\Omega_{PE}-1}, 1, \Omega_{PE}\right\}.
  \label{eq:lGPE}
\end{equation}
We define
\begin{align}
  \kappa_{\min} &= \frac{1}{\Gamma_{\max}\Omega_{PE}},\\
  \kappa_{\max} &= \min\Bigg\{\frac{1+\lambda_\Gamma}{\lambda_\Gamma\Omega_{\max}\Gamma_{\max}}, \frac{1}{\lambda_\Gamma\Gamma_{\max}},\allowbreak \frac{1}{(1-\lambda_\Omega)\Omega_{\max}\Gamma_{\max}}\Bigg\},
  \label{eq:kappaPE}
\end{align}
and choose any $\kappa$ that satisfies
\begin{equation}
  \kappa_{\min} \leq \kappa < \kappa_{\max}.
  \label{eq:kappaBound}
\end{equation}
According to Lemma \ref{lemma:5}, $\Gamma_{PE}I\preceq\Gamma_k\preceq\Gamma_{\max}I$, and $\Gamma_{PE}$ is defined as
\begin{equation}
  \Gamma_{PE} = \min\left\{\frac{1}{\kappa\Omega_{\max}}, \Gamma_{\max} + \lambda_\Gamma(\Gamma_{\max} - \kappa\Omega_{\max}\Gamma_{\max}^2)\right\}.
  \label{eq:GPE}
\end{equation}
Furthermore, let
\begin{equation}
  \bar{\Gamma}_{\min} = \min\left\{\Gamma_{\max} - \lambda_\Gamma\kappa\Gamma_{\max}^2, \Gamma_{PE} - \lambda_\Gamma\kappa\Gamma_{PE}^2\right\}.
  \label{eq:gamma_bar_pe}
\end{equation}
We give the expression of the upper-bound for the decay rate as
\begin{equation}
  \label{eq:mu1}
  \mu_1 = \Psi_{\min}\bar{\Gamma}_{\min},
\end{equation}
and consider a parameter $\mu_2$ that satisfies $0 < \mu_2 < \mu_1$. In \eqref{eq:mu1}, $\Psi_{\min} = (1 + \lambda_\Gamma \bar{\Gamma}_{\min}^{-1} \Gamma_{\max})^{-1}\allowbreak\Upsilon_{\min}\Gamma_{\max}^{-2}$, where $\allowbreak \Upsilon_{\min} = \min\big\{ \lambda_\Gamma\Gamma_{\max}\left[1 - \kappa(1 - \lambda_\Omega)\Omega_{\max}\Gamma_{\max}\right]$,  $\lambda_\Gamma\Gamma_{PE}\left[1 - \kappa(1 - \lambda_\Omega)\Omega_{\max}\Gamma_{PE}\right] \big\}$. Finally, a compact set is defined as
\begin{equation}
  \label{eq:14}
  D_{PE} = \left\{V | V\leq V_{PE} \right\},
\end{equation}
where
\begin{equation}
  \label{eq:13}
  V_{PE} = \left[\frac{\Delta^*\left(c_1 + \sqrt{c_1^2 + 4c_2(\mu_1 - \mu_2)}\right)}{2(\mu_1 - \mu_2)}\right]^2,
\end{equation}
\begin{equation}
  c_1 = \bar{\Gamma}_{\max}^{1/2} \bar{\Gamma}_{\min}^{-1},
  \label{eq:c1}
\end{equation}
\begin{equation}
  c_2 = \bar{\Gamma}_{\min}^{-1}.
  \label{eq:c2}
\end{equation}
We now state the stability result with persistent excitation.
\begin{theorem}
  \label{theo:pe}
  Let $\phi_k$ be persistently exciting. For the ARMA model in \eqref{eq:10}, under the assumptions in \ref{ass:1} and \ref{ass:2}, subject to the bounds in \eqref{eq:para-diff}, if the hyperparameters $\lambda_\Omega$, $\lambda_\Gamma$ and $\kappa$ satisfy \eqref{eq:lOPE}, \eqref{eq:lGPE} and \eqref{eq:kappaBound}, respectively, the adaptive algorithm in \eqref{eq:3}, \eqref{eq:1} and \eqref{eq:2} guarantees that in $D_{PE}^c$
  \begin{equation*}
    V_k \leq \exp\left[-\mu_2(k - k_2)\right]V_{k_2}, \;\forall k \geq k_2
  \end{equation*}
\end{theorem}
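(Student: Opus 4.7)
The plan is to combine the parameter-error recursion with a Woodbury-based factorisation of $\bar\Gamma_{k+1}^{-1}$ and to extract an exponential-decay inequality for $V_k$ that carries a drift term proportional to $\Delta^*$. \emph{Error dynamics.} Substituting $e_k=\phi_k^\top\tilde\theta_k$ into \eqref{eq:3}, subtracting $\theta^*_{k+1}$, and writing $\bar\phi_k:=\phi_k/\sqrt{1+\|\phi_k\|^2}$, I get the perturbed recursion $\tilde\theta_{k+1} = M_k\tilde\theta_k - \Delta\theta^*_{k+1}$ with $M_k:=I - \lambda_\Gamma\kappa\,\Gamma_k\bar\phi_k\bar\phi_k^\top$. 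A direct check shows that \eqref{eq:bar_gamma_pe} factorises as $\bar\Gamma_{k+1}=\Gamma_k M_k^\top = M_k\Gamma_k$ (hence is symmetric), so the key identity $\bar\Gamma_{k+1}^{-1} M_k = \Gamma_k^{-1}$ holds. Lemma \ref{lemma:kailath} additionally yields
\[
\bar\Gamma_{k+1}^{-1} = \Gamma_k^{-1} + \frac{\lambda_\Gamma\kappa\,\bar\phi_k\bar\phi_k^\top}{1-\lambda_\Gamma\kappa\,\bar\phi_k^\top\Gamma_k\bar\phi_k},
\]
with positive denominator by the bound $\kappa<1/(\lambda_\Gamma\Gamma_{\max})$ inside \eqref{eq:kappaPE} together with $\|\bar\phi_k\|^2<1$; in particular $\bar\Gamma_{k+1}^{-1}\succeq\Gamma_k^{-1}$.

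\emph{Lyapunov difference.} Using the identity to evaluate $M_k^\top\bar\Gamma_{k+1}^{-1}M_k=\Gamma_k^{-1}-\lambda_\Gamma\kappa\,\bar\phi_k\bar\phi_k^\top$ and expanding $V_{k+1}=\tilde\theta_{k+1}^\top\bar\Gamma_{k+1}^{-1}\tilde\theta_{k+1}$ gives
\[
V_{k+1} = \tilde\theta_k^\top\bigl(\Gamma_k^{-1}-\lambda_\Gamma\kappa\,\bar\phi_k\bar\phi_k^\top\bigr)\tilde\theta_k - 2\tilde\theta_k^\top\Gamma_k^{-1}\Delta\theta^*_{k+1} + (\Delta\theta^*_{k+1})^\top\bar\Gamma_{k+1}^{-1}\Delta\theta^*_{k+1}.
\]
On $k\ge k_2$, Lemma \ref{lemma:5} gives $\Gamma_{PE}I\preceq\Gamma_k\preceq\Gamma_{\max}I$ and \eqref{eq:gamma_bar_pe} with \eqref{eq:kappaPE} gives $\bar\Gamma_{\min}I\preceq\bar\Gamma_k\preceq\bar\Gamma_{\max}I$ for a compatible $\bar\Gamma_{\max}\le\Gamma_{\max}$. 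Comparing $\Gamma_k^{-1}$ with $\bar\Gamma_k^{-1}$ via the Woodbury identity applied at the previous step, using \eqref{eq:lOPE} to bound the coupling between $\Omega_k$ and $\Omega_{k-1}$, and invoking the PE lower bound on $\Omega_k$ from Lemma \ref{lemma:3}, I would establish the homogeneous contraction
\[
\tilde\theta_k^\top\bigl(\Gamma_k^{-1}-\lambda_\Gamma\kappa\,\bar\phi_k\bar\phi_k^\top\bigr)\tilde\theta_k \leq (1-\mu_1)V_k,
\]
with $\mu_1=\Psi_{\min}\bar\Gamma_{\min}$ as in \eqref{eq:mu1}; the constraints \eqref{eq:lGPE}--\eqref{eq:kappaBound} are exactly what forces $\Upsilon_{\min},\Psi_{\min}>0$ and $\mu_1\in(0,1)$.

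\emph{Drift and compact set.} Cauchy--Schwarz together with $\|\Gamma_k^{-1}\tilde\theta_k\|\le\bar\Gamma_{\max}^{1/2}\bar\Gamma_{\min}^{-1}\sqrt{V_k}$ gives $|2\tilde\theta_k^\top\Gamma_k^{-1}\Delta\theta^*_{k+1}|\le c_1\Delta^*\sqrt{V_k}$, and $\bar\Gamma_{k+1}^{-1}\preceq\bar\Gamma_{\min}^{-1}I$ with $\|\Delta\theta^*_{k+1}\|\le\Delta^*$ bounds the last term by $c_2(\Delta^*)^2$, with $c_1,c_2$ as in \eqref{eq:c1}--\eqref{eq:c2}. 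Hence
\[
V_{k+1} \leq (1-\mu_1)V_k + c_1\Delta^*\sqrt{V_k} + c_2(\Delta^*)^2.
\]
Solving $(\mu_1-\mu_2)V - c_1\Delta^*\sqrt{V} - c_2(\Delta^*)^2=0$ for $\sqrt{V}$ yields exactly $\sqrt{V_{PE}}$ from \eqref{eq:13}, so whenever $V_k>V_{PE}$ (i.e.\ $V_k\in D_{PE}^c$) the right-hand side is at most $(1-\mu_2)V_k\le e^{-\mu_2}V_k$; iterating from $k_2$ gives the claim.

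\emph{Main obstacle.} The principal technical step is the homogeneous contraction with coefficient $\mu_1=\Psi_{\min}\bar\Gamma_{\min}$: one must transfer the PE lower bound on $\Omega_k$ through the coupled $\Gamma_k$/$\bar\Gamma_k$ dynamics into a pointwise quadratic-form bound compatible with the Lyapunov function, and invoke the upper bound on $\kappa$ in \eqref{eq:kappaPE} at precisely the right moment to keep $\Upsilon_{\min}>0$; the factor $(1+\lambda_\Gamma\bar\Gamma_{\min}^{-1}\Gamma_{\max})^{-1}$ inside $\Psi_{\min}$ arises from this comparison. The remaining Young-inequality handling of the drift and the quadratic-formula manipulation defining $V_{PE}$ are then routine.
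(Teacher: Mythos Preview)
Your proposal is correct and follows essentially the same route as the paper: the paper likewise uses the factorisation $\bar\Gamma_k\Gamma_{k-1}^{-1}$ (your $M_{k-1}$) to expand $V_k$, bounds the quadratic term by $\tilde\theta_{k-1}^\top\Gamma_{k-1}^{-1}\tilde\theta_{k-1}$, and then obtains the homogeneous contraction from the decomposition $\Gamma_k=\bar\Gamma_k+\Upsilon_k$ together with Lemma~\ref{lemma:kailath}, yielding $\Gamma_k^{-1}=\bar\Gamma_k^{-1}-\Psi_k$ with $\Psi_k\succeq\Psi_{\min}I$---exactly the step you flag as the main obstacle. Your Sherman--Morrison expression for $\bar\Gamma_{k+1}^{-1}$ is correct but not actually needed, and condition \eqref{eq:lOPE} enters only to make the interval $[\kappa_{\min},\kappa_{\max})$ nonempty rather than directly in the contraction bound.
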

\begin{proof}
  Starting from \eqref{eq:3}, we get
  \begin{equation}
    \label{eq:7}
    \theta_{k} - \theta_{k-1}^*= \left(I - \frac{\lambda_\Gamma\kappa\Gamma_{k-1}\phi_{k-1}\phi_{k-1}^\top}{1 + \|\phi_{k-1}\|^2}\right)\tilde{\theta}_{k-1}.
  \end{equation}
  Using \eqref{eq:bar_gamma_pe}, $\Gamma_k$ can be written as
  \begin{equation}
    \Gamma_{k} = \bar{\Gamma}_{k} + \underbrace{\lambda_\Gamma\Gamma_{k-1} - \lambda_\Gamma\kappa(1-\lambda_\Omega)\Gamma_{k-1}\Omega_{k-1}\Gamma_{k-1}}_{\Upsilon_{k}}
    \label{eq:rest_v}
  \end{equation}
  Now we show that under the conditions in Theorem \ref{theo:pe}, $\bar{\Gamma}_{k}\succeq\bar{\Gamma}_{\min}I\succ 0$. With simple manipulations, it can be shown that
  \begin{equation}
    \bar{\Gamma}_{k} \succeq Q_{k-1}\left(\Lambda_{k-1} - \lambda_\Gamma\kappa\Lambda_{k-1}^2\right)Q_{k-1}^\top.
    \label{eq:15}
  \end{equation}
  It is easy to see that $\Gamma_{PE}\leq\lambda_{k-1, i}\leq\Gamma_{\max}$, $\forall i$, where $\lambda_{k-1,i}$ is the $i$th diagonal element of $\Gamma_{k-1}$. From the inequality in \eqref{eq:15}, $\bar{\Gamma}_{k}$ assumes its lower bound either when $\lambda_{k-1, i} = \Gamma_{PE}$ or when $\lambda_{k-1, i} = \Gamma_{\max}$. It follows therefore that $\bar{\Gamma}_k \succeq \bar{\Gamma}_{\min}I$, where $\bar{\Gamma}_{\min} > 0$ is defined in \eqref{eq:gamma_bar_pe}. It is also obvious that $\bar{\Gamma}_{k} \preceq \bar{\Gamma}_{\max}I$, where $\bar{\Gamma}_{\max} = \Gamma_{\max}$. Using the same approach, we can also show that $0 \prec \Upsilon_{\min}I \preceq \Upsilon_k \preceq \Upsilon_{\max}I$, where $\Upsilon_k$ is defined in \eqref{eq:rest_v} and $\Upsilon_{\max} = \lambda_\Gamma\Gamma_{\max}$.
  With these bounds established, we consider the Lyapunov function in \eqref{eq:lya}.
  Since $\tilde{\theta}_{k} = \theta_{k-1}^* - \theta_k^* + \bar{\Gamma}_{k}\Gamma_{k-1}^{-1}\tilde{\theta}_{k-1}$, we obtain that
  \begin{align*}
    V_{k} &= \left(\theta_{k-1}^* - \theta_k^* + \bar{\Gamma}_k\Gamma_{k-1}^{-1}\tilde{\theta}_{k-1}\right)^\top \bar{\Gamma}_k^{-1}\\
          &\hspace{0.5cm}\left[\theta_{k-1}^* - \theta_k^* + \left(I - \lambda_\Gamma\kappa\frac{\Gamma_{k-1}\phi_{k-1}\phi_{k-1}^\top}{1 + \|\phi_{k-1}\|^2}\right)\tilde{\theta}_{k-1} \right]\\
          &\leq \tilde{\theta}_{k-1}^\top \Gamma_{k-1}^{-1} \tilde{\theta}_{k-1} + 2\tilde{\theta}_{k-1}^\top\Gamma_{k-1}^{-1}(\theta_{k-1}^* - \theta_k^*) \\
          &\hspace{0.3cm}+ (\Delta^*)^2\bar{\Gamma}_{\min}^{-1}.
  \end{align*}
  Since $\Upsilon_{k}\succ 0$, $\Upsilon_k$ can be decomposed as $\Upsilon_{k} = \Upsilon_{k}^{1/2}\Upsilon_{k}^{1/2}$. Applying Lemma \ref{lemma:kailath} to \eqref{eq:rest_v}, we have
  \begin{equation}
    \label{eq:8}
    \Gamma_{k}^{-1} = \bar{\Gamma}_{k}^{-1} - \underbrace{\bar{\Gamma}_{k}^{-1}\Upsilon_{k}^{1/2}\left(I + \Upsilon_{k}^{1/2}\bar{\Gamma}_{k}^{-1}\Upsilon_{k}^{1/2}\right)^{-1}\Upsilon_{k}^{1/2}\bar{\Gamma}_{k}^{-1}}_{\Psi_{k}}.
  \end{equation}
  Since $\Upsilon_{k}^{1/2}\succ 0$ and $\bar{\Gamma}_{k}^{-1} \succ 0$, applying the definition of positive definite matrices we obtain that $\Upsilon_{k}^{1/2}\bar{\Gamma}_{k}^{-1}\Upsilon_{k}^{1/2} \succ 0$. Also, it follows that $\Upsilon_{k}^{1/2} \bar{\Gamma}_{k}^{-1} \Upsilon_{k}^{1/2} \preceq \lambda_\Gamma \bar{\Gamma}_{\min}^{-1} \Gamma_{\max}I$. Therefore $\left(I + \Upsilon_{k}^{1/2}\bar{\Gamma}_{k}^{-1}\Upsilon_{k}^{1/2}\right)^{-1} \succeq (1 + \lambda_\Gamma\bar{\Gamma}_{\min}^{-1}\Gamma_{\max})^{-1}I$. Applying the definition of positive definite matrices again, we get $\Psi_{k} \succeq \Psi_{\min}I \succ 0$. Substitute $\Gamma_k^{-1}$ in \eqref{eq:8} into $V_{k}$, we get
  \begin{align}
    \begin{split}
      V_{k} &\leq \tilde{\theta}_{k-1}^\top\Gamma_{k-1}^{-1}\tilde{\theta}_{k-1} + \Delta^*\bar{\Gamma}_{\max}^{1/2} \bar{\Gamma}_{\min}^{-1}\sqrt{V_{k-1}} + (\Delta^*)^2\bar{\Gamma}_{\min}^{-1} \\
      &\leq (1 - \underbrace{\Psi_{\min}\bar{\Gamma}_{\min}}_{\mu_1})V_{k-1} + \Delta^*\underbrace{\bar{\Gamma}_{\max}^{1/2} \bar{\Gamma}_{\min}^{-1}}_{c_1}\sqrt{V_{k-1}} \\
      &\hspace{0.3cm}+ (\Delta^*)^2\underbrace{\bar{\Gamma}_{\min}^{-1}}_{c_2},
    \end{split}
    \label{eq:vk_upper}
  \end{align}
  where we have replaced $\tilde{\theta}_{k-1}$ with an upper bound involving $V_{k-1}$. From \eqref{eq:8} and $\Gamma_k^{-1}\succ 0$, it follows that $\Psi_{\min}\bar{\Gamma}_{\min} < 1$. Therefore, the increment of $V_k$, $\Delta V_k$ can be expressed as
  \begin{align*}
    \Delta V_k &= V_k - V_{k-1}\\
               &\leq -\mu_1 V_{k-1} + c_1\Delta^*\sqrt{V_{k-1}} + c_2(\Delta^*)^2,
  \end{align*}
  where $\mu_1$ is defined in \eqref{eq:mu1}. Choosing any $0 < \mu_2 < \mu_1$, it can be noted that $\Delta V_k < 0$ in $D_{PE}^c$, where the compact set $D_{PE}$ is defined in \eqref{eq:14}.
  For any $k$ such that $V_k > V_{PE}$, it follows that
  \begin{equation*}
    V_k \leq (1 - \mu_2)V_{k-1}
  \end{equation*}
  We therefore obtain that when $k\geq k_2$, for all $V_k > V_{PE}$
  \begin{align*}
    V_k &\leq (1 - \mu_2)^{k-k_2}V_{k_2}\\
        &\leq \exp\left[-\mu_2(k - k_2)\right]V_{k_2}.
  \end{align*}
\end{proof}

\begin{remark}
  The key steps that enabled the proof of Theorem \ref{theo:pe} was 1) to express $\Gamma_k=\bar{\Gamma}_k + \Upsilon_k$, and 2) to define $V_k$ only using $\bar{\Gamma}_k$ as in \eqref{eq:bar_gamma_pe} rather than $\Gamma_k$. This provided the necessary tractability as it avoided matrix inversions involving $\Gamma_k$. From \eqref{eq:bar_gamma_pe}, it can be seen that $\bar{\Gamma}_k$ is similar to the standard gain matrix used in recursive least squares \cite{Goodwin_1984} and therefore retains the desired positive-definiteness property with suitable choice of the hyperparameters in \eqref{eq:GPE}. This in turn allowed the proof of Theorem \ref{theo:pe}.
\end{remark}

\begin{remark}
  Theorem \ref{theo:pe} guarantees convergence of $V_k$ to a compact set $D_{PE}$. For static parameters, i.e., $\theta_k^* \equiv \theta^*$ and $\Delta^* = 0$, from \eqref{eq:13} it can be shown that under PE, $\tilde{\theta}_k$ converges exponentially fast towards the origin.
\end{remark}

\begin{remark}
  Note that under assumptions \ref{ass:1} and \ref{ass:2}, \eqref{eq:13} can also be written as
  \begin{equation}
    \label{eq:19}
    V_{PE} = 2\Delta^*\theta_{\max}^*\left[\frac{c_1 + \sqrt{c_1^2 + 4c_2(\mu_1 - \mu_2)}}{2(\mu_1 - \mu_2)}\right]^2.
  \end{equation}
  From \eqref{eq:19}, note that the compact set scales linearly with both the bounds of the unknown parameter $\theta_{\max}^*$ and the time variation of the unknown parameter $\Delta^*$. 
\end{remark}

\begin{remark}
  A few comments about the choice of the hyperparameters in \eqref{eq:3}, \eqref{eq:1} and \eqref{eq:2} are in order. We assume that $\Gamma_{\max}$ is given. Parameter $\lambda_\Omega$ is chosen according to \eqref{eq:lOPE} and $\Omega_{\max}$ is available according to Lemma \ref{lemma:1}. Then $\lambda_\Gamma$ and $\kappa$ will in turn be chosen from the bounds in \eqref{eq:lGPE}, \eqref{eq:kappaBound}. Thus all hyperparameters can be determined and $\Gamma_{PE}$ can be derived from \eqref{eq:GPE}.
\end{remark}
\begin{remark}
  Note that in practice, since we generally do not know the excitation level $\alpha$, the period $\Delta T$ and the maximum normalization signal $d$, it is hard to use \eqref{eq:lOPE} as a guideline to choose $\lambda_\Omega$. Thus careful tuning of the hyperparameters is required.
\end{remark}

\section{Stability and Convergence Analysis for Algorithm 2}
\label{sec:proj}
As mentioned earlier, we assume that the unknown parameter $\theta^*$ is a constant and does not vary with $k$. When there is no persistent excitation in the input signal, the matrix $\Gamma_k$ can exhibit bursting behavior and in fact drift to infinity\cite{Goodwin_1984, Astrom_1995}. To avoid this, and prove stability, we introduce a projection operator in the update of $\Gamma_k$ as a replacement of \eqref{eq:2},
\begin{equation}
	\Gamma_k = \mathrm{Proj}_{\Gamma_{\max}}(\Gamma_k'),
	\label{eq:28}
\end{equation}
where
\begin{equation*}
	\Gamma_k' = \Gamma_{k-1} + \lambda_\Gamma(\Gamma_{k-1} - \kappa\Gamma_{k-1}\Omega_k\Gamma_{k-1}).
\end{equation*}
The function $\mathrm{Proj}_{\Gamma_{\max}}(\cdot)$ is defined as in Algorithm \ref{alg:1}.

The following Lemma establishes boundedness of $\Gamma_k$ in \eqref{eq:28} with no assumptions on PE.
\begin{lemma}
  \label{lemma:7}
  If $\Gamma_{\min}I \preceq \Gamma_0 \preceq \Gamma_{\max}I$ and $\kappa\leq \frac{\lambda_\Gamma}{\lambda_\Omega} [(1+\lambda_\Gamma)\Gamma_{\max}-\Gamma_{\min}]\Gamma_{\max}$, the update of $\Gamma_k$ according to \eqref{eq:28} ensures that $\Gamma_{\min}I \preceq \Gamma_k \preceq \Gamma_{\max}I$ for all $k\geq 0$.
\end{lemma}

\begin{proof}
	Due to the projection in Algorithm \ref{alg:1}, $\Gamma_k \preceq \Gamma_{\max}I$, $\forall k\geq 0$. Since $\Gamma_{k-1}$ is positive definite, $\Gamma_{k-1}$ can be factorized as \eqref{eq:18}, thus
	\begin{align*}
		\Gamma_k &= \mathrm{Proj}_{\Gamma_{\max}}[\Gamma_{k-1} + \lambda_\Gamma(\Gamma_{k-1} - \kappa\Gamma_{k-1}\Omega_k\Gamma_{k-1})]\\
		&\succeq \mathrm{Proj}_{\Gamma_{\max}}[\Gamma_{k-1} + \lambda_\Gamma(\Gamma_{k-1} - \kappa\Omega_{\max}\Gamma_{k-1}^2)]\\
		&= \mathrm{Proj}_{\Gamma_{\max}}\{Q_{k-1}[\Lambda_{k-1}\\
		&\;\;\; + \lambda_\Gamma(\Lambda_{k-1} - \kappa\Omega_{\max}\Lambda_{k-1}^2)]Q_{k-1}^\top\}
	\end{align*}
	Denote the $i$th diagonal element of $\Lambda_{k-1}$ as $\lambda_{k-1,i}$, $i=1,\ldots, N$. The smallest $\lambda_{k,i}$ will be obtained when $\lambda_{k-1,i} = \Gamma_{\max}$. From the upper bound of $\kappa$, $\lambda_{k,i}\geq\Gamma_{\min}$. Using induction, $\Gamma_k \succeq \Gamma_{\min}I$, $\forall k$.
\end{proof}

\begin{remark}
  Lemma \ref{lemma:7} shows that with projection, $\Gamma_k$ will be bounded even without persistent excitation. The projection in Algorithm \ref{alg:1} ensures that the upper bound will always be preserved. The lower bound of $\Gamma_k$ is possible because $\Omega_k$ always has an upper bound.
\end{remark}

Before stating the main stability results, we first define a few quantities. Let
\begin{equation}
  \label{eq:29}
  0 < \lambda_\Omega < 1,
\end{equation}
\begin{equation}
  \label{eq:31}
  \lambda_\Gamma > 0,
\end{equation}
\begin{equation}
  \label{eq:30}
  \Gamma_{\min}I \leq \Gamma_0 \leq \Gamma_{\max}I.
\end{equation}
Parameter $\kappa$ satisfies
\begin{align}
  0 < \kappa < \min&\left\{\frac{1}{\Omega_{\max}\Gamma_{\min}}, \frac{\lambda_\Gamma}{\lambda_\Omega}[(1 + \lambda_\Gamma)\Gamma_{\max} - \Gamma_{\min}]\Gamma_{\max},\right.\nonumber\\
  &\left. \frac{1}{\lambda_\Gamma\Gamma_{\max}}, \frac{1}{(1-\lambda_\Omega)\Gamma_{\max}\Omega_{\max}}\right\},\label{eq:32}
\end{align}
and $\Gamma_{\min}$ is defined as
\begin{equation}
  \label{eq:33}
  \Gamma_{\min} = \min\left\{\Gamma_{\max} + \lambda_\Gamma(\Gamma_{\max}-\kappa\Omega_{\max}\Gamma_{\max}^2), \frac{1}{\kappa\Omega_{\max}}\right\},
\end{equation}
where $\Omega_{\max} = 1/\lambda_\Omega$. Finally, we define
\begin{equation}
  \label{eq:34}
  \bar\Gamma_{\min} = \min\left\{\Gamma_{\max} - \lambda_\Gamma\kappa\Gamma_{\max}^2, \Gamma_{\min} - \lambda_\Gamma \kappa\Gamma_{\min}^2\right\},
\end{equation}
\begin{align}
  \Upsilon_{\min} = \lambda_\Gamma \min &\left\{\Gamma_{\min} - \kappa(1 - \lambda_\Omega)\Gamma_{\min}^2 \Omega_{\max},\right.\nonumber\\
  &\left.\Gamma_{\max} - \kappa(1 - \lambda_\Omega)\Gamma_{\max}^2\Omega_{\max}\right\}.\label{eq:37}
\end{align}
\begin{theorem}
  \label{theo:3}
  For the ARMA model in \eqref{eq:10}, if the parameters in $\theta^*$ are constant, and the hyperparameters $\lambda_\Omega$, $\lambda_\Gamma$ and $\kappa$ satisfy \eqref{eq:29}, \eqref{eq:31} and \eqref{eq:32}, respectively, the adaptive algorithm in \eqref{eq:3}, \eqref{eq:1} and \eqref{eq:28} guarantees that
  \begin{equation*}
    V_k \leq V_{k-1}.
  \end{equation*}
\end{theorem}
\begin{proof}
 From \eqref{eq:3} and \eqref{eq:bar_gamma_pe}, the parameter error can be written as
 \begin{equation}
   \tilde{\theta}_k = \bar{\Gamma}_k\Gamma_{k-1}^{-1}\tilde{\theta}_{k-1},
   \label{eq:35}
 \end{equation}
 where $\tilde{\theta}_k$ is defined as $\tilde{\theta}_k = \theta_k - \theta^*$. Consider again a decomposition of $\Gamma_k$ as in \eqref{eq:rest_v}. Following the approach to proving that $\bar\Gamma_k$ and $\Upsilon_k$ are bounded in \eqref{eq:rest_v}, it is easy to show that $0\prec\bar\Gamma_{\min}I\preceq \bar\Gamma_k \preceq \bar\Gamma_{\max}I$ and $0\prec\Upsilon_{\min}I\preceq \Upsilon_k\preceq \Upsilon_{\max}I$ by considering the hyperparameters in \eqref{eq:29}-\eqref{eq:32}, where $\bar\Gamma_{\min}$, $\Upsilon_{\min}$ are defined in \eqref{eq:34} and \eqref{eq:37}, respectively, and $\bar\Gamma_{\max} = \Gamma_{\max}$, $\Upsilon_{\max} = \lambda_\Gamma\Gamma_{\max}$.
 Substitute \eqref{eq:35} into the Lyapunov function in \eqref{eq:lya} and expand it, we obtain
 \begin{equation}
   V_k = \tilde{\theta}_{k-1}^\top \Gamma_{k-1}^{-1}\tilde{\theta}_{k-1} - \lambda_\Gamma \kappa \frac{\tilde{\theta}_{k-1}^\top\phi_{k-1}\phi_{k-1}^\top\tilde{\theta}_{k-1}}{1 + \|\phi_{k-1}\|^2}.
 \end{equation}
 Since $\bar\Gamma_k\succ 0$ and $\Upsilon_k \succ 0$, the update of $\Gamma_k $ in \eqref{eq:28} indicates that $\Gamma_k \succeq \bar\Gamma_k$. Therefore,
 \begin{align}
   \label{eq:36}
   V_k &\leq \tilde{\theta}_{k-1}^\top \bar\Gamma_{k-1}^{-1}\tilde{\theta}_{k-1} - \lambda_\Gamma \kappa \frac{\tilde{\theta}_{k-1}^\top\phi_{k-1}\phi_{k-1}^\top\tilde{\theta}_{k-1}}{1 + \|\phi_{k-1}\|^2}\\
   &= V_{k-1} - \lambda_\Gamma \kappa \frac{\tilde{\theta}_{k-1}^\top\phi_{k-1}\phi_{k-1}^\top\tilde{\theta}_{k-1}}{1 + \|\phi_{k-1}\|^2}.
 \end{align}
 Thus $\Delta V_k \leq 0$.
\end{proof}
\begin{remark}
The second main contribution of the paper is Theorem \ref{theo:3}. While Algorithm 2 is fairly similar to RLS with forgetting factor, Theorem \ref{theo:3} guarantees that stability can be ensured with Algorithm 2. To our knowledge, RLS with forgetting factor has not been shown to be stable under non-PE conditions. Extension to the case of $\theta_k^*$ is a topic for future work, which will require a combined examination of \eqref{eq:8} and Algorithm \ref{alg:1}.
\end{remark}

\section{Numerical Simulations}
\label{sec:sim}
A discretized and linearized model of F-16 longitudinal dynamics with a sample rate of 10 Hz is given by \cite{Stevens2003},
\begin{equation}
  \label{eq:21}
  G(\bm{q}) = \frac{-0.6213\bm{q} + 0.5839}{\bm{q}^2 - 1.8403\bm{q} + 0.8591}.
\end{equation}
The problem is that the coefficients of \eqref{eq:21} are unknown and we propose the use of Algorithms 1 and 2 to identify them using an input
\begin{equation}
  \label{eq:22}
  u(k) = \sin\left(\frac{3\pi k}{4}\right) + \sin\left(\frac{2\pi k}{5}\right) + \sin\left(\frac{\pi k}{5}\right).
\end{equation}
We compare our algorithms with the standard RLS algorithm
\begin{align}
  \label{eq:23}
  \theta_k &= \theta_{k-1} + \frac{\Gamma_{k-1}\phi_{k-1}e_{k-1}}{1 + \phi_{k-1}^\top \Gamma_{k-2}\phi_{k-1}},\\
  \Gamma_{k-1} &= \Gamma_{k-2} - \frac{\Gamma_{k-2}\phi_{k-1}\phi_{k-1}^\top \Gamma_{k-2}}{1 + \phi_{k-1}^\top \Gamma_{k-2}\phi_{k-1}},\label{eq:24}
\end{align}
and the RLS with forgetting chosen as
\begin{align}
  \label{eq:25}
  \theta_k &= \theta_{k-1} + \frac{\Gamma_{k-1}\phi_{k-1}e_{k-1}}{\alpha + \phi_{k-1}^\top \Gamma_{k-2}\phi_{k-1}},\\
  \Gamma_{k-1} &= \frac{1}{\alpha}\left[\Gamma_{k-2} - \frac{\Gamma_{k-2}\phi_{k-1}\phi_{k-1}^\top \Gamma_{k-2}}{\alpha + \phi_{k-1}^\top \Gamma_{k-2}\phi_{k-1}}\right],\label{eq:26}
\end{align}
where $\alpha = 0.99$. In all simulations, we choose the initial values as $\Gamma_0 = 3I$ and $\theta_0 = 0$.

\subsection{Experiment 1}
\label{sec:experiment-1}
In experiment 1, we choose the following parameters for \eqref{eq:3}-\eqref{eq:2}:
\begin{equation*}
  \lambda_\Omega = 0.1,\; \lambda_\Gamma = 0.4,\; \kappa = 15.
\end{equation*}
Fig. \ref{fig:error_plot} shows the output and parameter error over 100 seconds for Algorithm 1. It can be noted that both RLS with forgetting and Algorithm 1 are faster than the standard RLS algorithm, with Algorithm 1 producing somewhat larger learning rates. Fig. \ref{fig:history_gain} shows the evolution of time varying gain matrices in RLS with forgetting algorithm and our algorithm.
\begin{figure}[htb]
  \minipage{0.5\linewidth}
  \includegraphics[width=\linewidth]{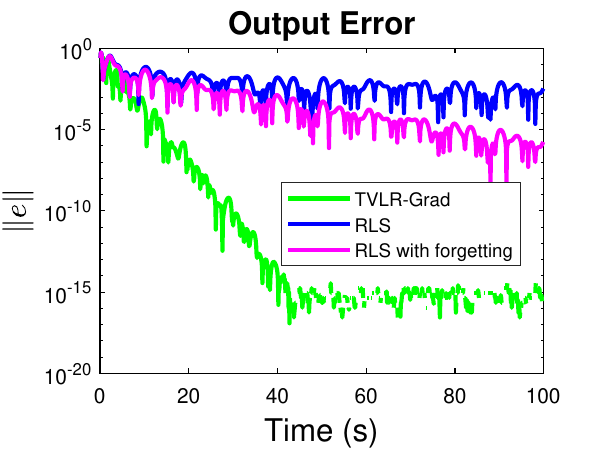}
  \endminipage
  \hfill
  \minipage{0.5\linewidth}
  \includegraphics[width=\linewidth]{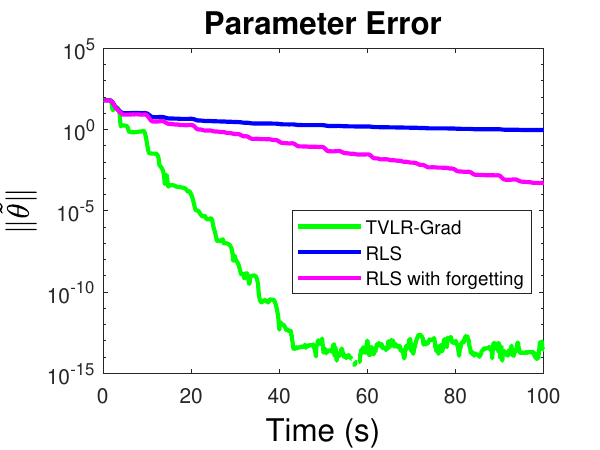}
  \endminipage
  \caption{Experiment 1: Output error and parameter error of the three algorithms.}
  \label{fig:error_plot}
\end{figure}

\begin{figure}[!htb]
  \centering
  \minipage{0.5\linewidth}
  \includegraphics[width=\linewidth]{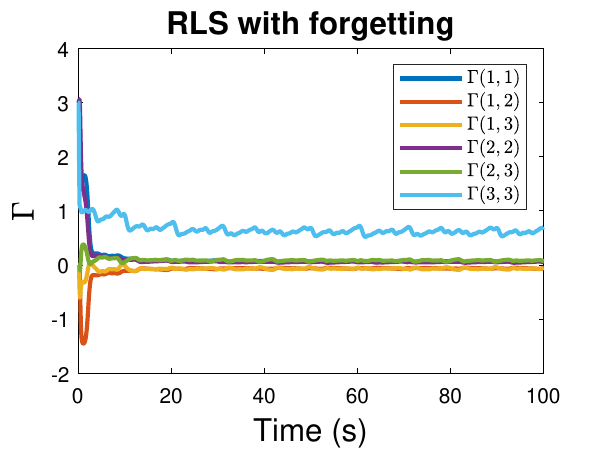}
  \endminipage
  \hfill
  \minipage{0.5\linewidth}
  \includegraphics[width=\linewidth]{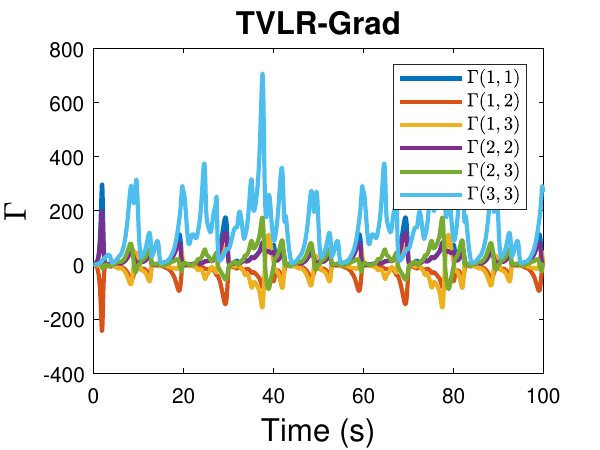}
  \endminipage
  \caption{Experiment 1: Histories of the learning rate matrix.}
  \label{fig:history_gain}
 \end{figure}
 
\subsection{Experiment 2}
\label{sec:experiment-2}
In experiment 2, the following parameters are chosen in Algorithm 2.
\begin{equation*}
  \lambda_\Omega = 0.05,\; \lambda_\Gamma = 0.4,\; \kappa = 15,\; \Gamma_{\max} = 15.
\end{equation*}
Fig. \ref{fig:error2} shows the resulting output error and parameter error. In addition to the obvious effect of a lower learning rate due to the projection, we also note that compared to Fig. \ref{fig:error_plot}, the convergence of our algorithm becomes slower. Fig. \ref{fig:history_gain2} shows the evolution of the time-varying gain matrix in RLS with forgetting and our algorithm. Compared to Fig. \ref{fig:history_gain}, the values in $\Gamma_k$ in our algorithm become a lot smaller and are comparable to those of RLS with forgetting. The advantage of Algorithm 2 over RLS with forgetting, as mentioned in the introduction, is a stability guarantee established above in Theorem \ref{theo:3}.

\begin{figure}[htb]
  \centering
  \minipage{0.5\linewidth}
  \includegraphics[width=\linewidth]{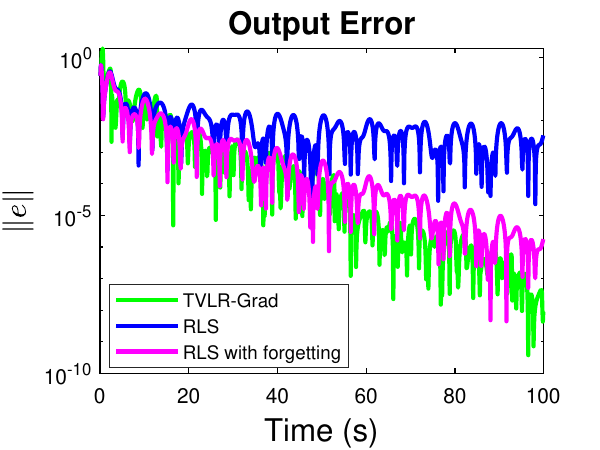}
  \endminipage
  \hfill
  \minipage{0.5\linewidth}
  \includegraphics[width=\linewidth]{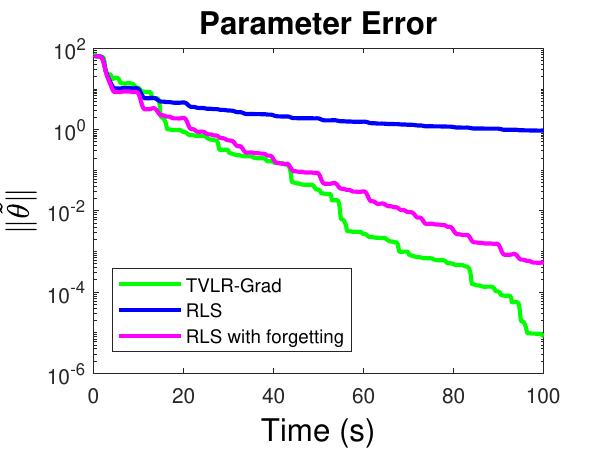}
  \endminipage
  \caption{Experiment 2: Output error and parameter error of the three algorithms.}
  \label{fig:error2}
\end{figure}

\begin{figure}[!htb]
  \centering
  \minipage{0.5\linewidth}
  \includegraphics[width=\linewidth]{no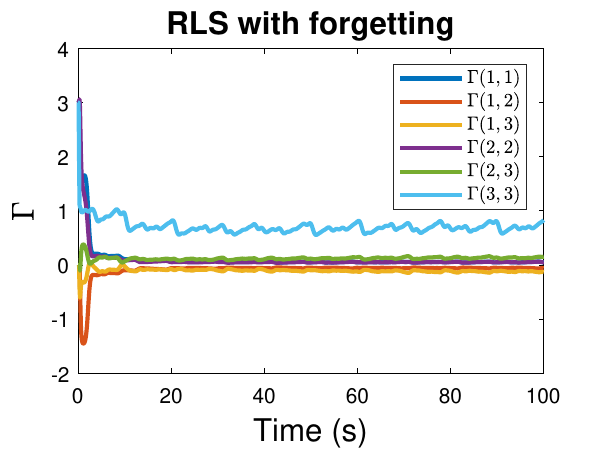}
  \endminipage
  \hfill
  \minipage{0.5\linewidth}
  \includegraphics[width=\linewidth]{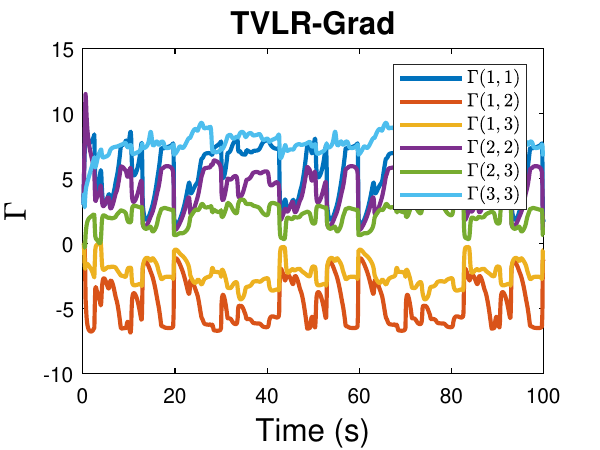}
  \endminipage
  \caption{Experiment 2: Histories of the learning rate matrix.}
  \label{fig:history_gain2}
\end{figure}

\subsection{Experiment 3}
\label{sec:experiment-3}
In experiment 3, we make the underlying true parameters time-varying through multiplying the true parameters by a sinusoidal wave. The time-varying parameters with become
\begin{equation*}
  \theta_k^* = [1 - 0.01 * \sin(0.02k)]\cdot\theta^*,
\end{equation*}
where $\theta^*$ has the same parameters we try to identify in experiment 1 and experiment 2. We adopt the same parameters for Algorithm 2 as in experiment 2. Fig. \ref{fig:error3} shows the output and parameter error. All parameters in the three algorithms converge to a compact set. The performance of Algorithm 2 is comparable to that of RLS with forgetting. Fig. \ref{fig:history_gain3} shows the evolution of the time-varying gain matrix in RLS with forgetting and our Algorithm 2. Again, due to the projection operator, the magnitudes are comparable to those of RLS with forgetting.
\begin{figure}[htb]
	\centering
	\minipage{0.5\linewidth}
	\includegraphics[width=\linewidth]{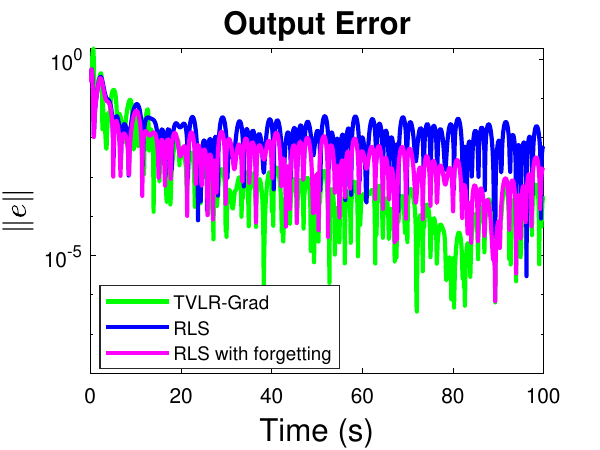}
	\endminipage
	\hfill
	\minipage{0.5\linewidth}
	\includegraphics[width=\linewidth]{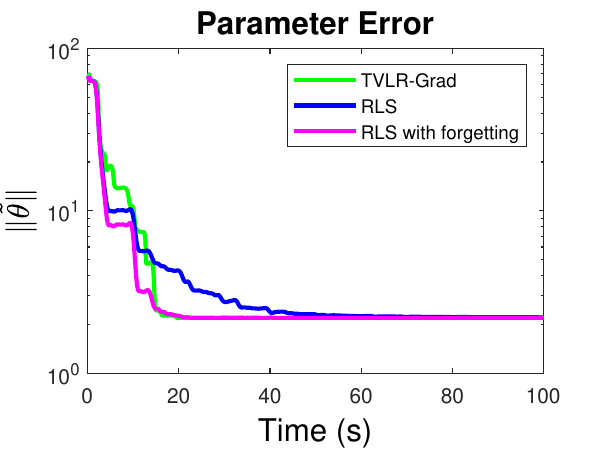}
	\endminipage
	\caption{Experiment 3: Output error and parameter error of the three algorithms.}
	\label{fig:error3}
\end{figure}

\begin{figure}[!htb]
	\centering
	\minipage{0.5\linewidth}
	\includegraphics[width=\linewidth]{tv_rls_forget.pdf}
	\endminipage
	\hfill
	\minipage{0.5\linewidth}
	\includegraphics[width=\linewidth]{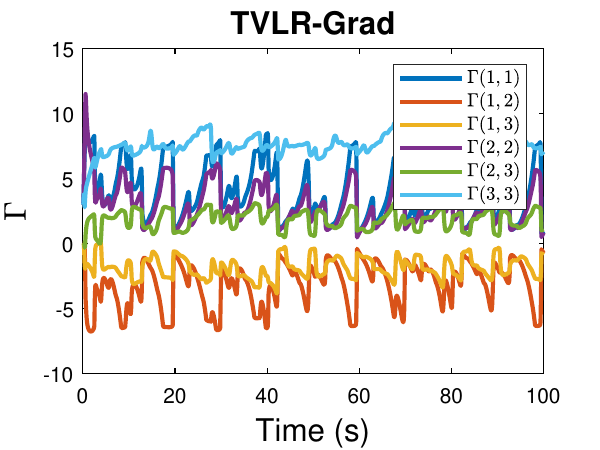}
	\endminipage
	\caption{Experiment 3: Histories of the learning rate matrix.}
	\label{fig:history_gain3}
\end{figure}

\section{CONCLUSIONS}
\label{sec:conclusion}
We have presented new adaptive algorithms, one for discrete-time parameter estimation of a class of time-varying plants when PE conditions are met, and another when no PE conditions are assumed and the plant parameters are constants. We showed that in the presence of time-varying parameters, the parameter estimation error converges uniformly to a compact set under conditions of persistent excitation, with the size of the compact set proportional to the time-variation of the unknown parameters. The bounds for the compact set are shown to scale linearly with both the bounds of the unknown parameter and the time variation of the unknown parameter. The second algorithm leverages a projection operator, which is then shown to result in boundedness guarantees through the use of a Lyapunov function. While this algorithm is similar to RLS with forgetting factor, the important distinction is the proof of stability.


\bibliographystyle{IEEEtran}
\bibliography{IEEEabrv,References.bib}

\end{document}